%% file: main.tex
\documentclass[pdflatex,sn-mathphys-num]{sn-jnl}

\usepackage{graphicx}
\usepackage{multirow}
\usepackage{amsmath,amssymb,amsfonts}
\usepackage{amsthm}
\newtheorem{proposition}{Proposition}
\usepackage[title]{appendix}
\usepackage{xcolor}
\usepackage{textcomp}
\usepackage{booktabs}
\usepackage{tabularx}
\usepackage{enumitem}
\usepackage{algorithm}
\usepackage{algorithmicx}
\usepackage{algpseudocode}
\usepackage{tikz}
\usepackage{pgfplots}
\pgfplotsset{compat=1.18}
\usetikzlibrary{positioning,arrows.meta,shapes.geometric,fit,calc,backgrounds,decorations.pathreplacing}
\hypersetup{hypertexnames=false}
\usepackage{bookmark}

\graphicspath{{figures/}}

\begin{document}

\input{sections/00_frontmatter}
\input{sections/01_introduction}

\input{sections/02_related_work}
\input{sections/03_method_motionaware}
\input{sections/05_experiments}
\input{sections/06_results_and_analysis}

\input{sections/07_limitations_ethics_repro}
\input{sections/08_conclusion}

\input{sections/09_appendix_protocol}

\backmatter

\bookmarksetup{startatroot}

\bmhead{Acknowledgements}
This work was supported by the National Aeronautics and Space Administration (NASA) through the Oklahoma EPSCoR Research Infrastructure Development (RIG) program (Prime Award No.~80NSSCM0029), under Subaward No.~1-511123-OU2 RIG to the University of Oklahoma, administered via Oklahoma State University.

\section*{Declarations}

\bmhead{Funding}
National Aeronautics and Space Administration (NASA), Oklahoma EPSCoR Research Infrastructure Development (RIG) program, Prime Award No.~80NSSCM0029, Subaward No.~1-511123-OU2 RIG.

\bmhead{Conflict of interest}
The authors declare no competing interests.

\bmhead{Ethics approval and consent to participate}
Not applicable.

\bmhead{Consent for publication}
Not applicable.

\bmhead{Data availability}
The FRED dataset used in this study is publicly available~\citep{fred}.

\bmhead{Materials availability}
Not applicable.

\bmhead{Code availability}
All architecture code, training configurations, evaluation scripts, and the error forensics pipeline are publicly available at \url{https://github.com/INQUIRELAB/SparseVoxelDet}.

\bmhead{Author contribution}
All authors contributed to the conception, design, and analysis of the study. All authors read, revised, and approved the final manuscript.

\bibliography{bib/references_master,bib/references_staging}

\end{document}

%% file: sections/00_frontmatter.tex
\title[SparseVoxelDet]{SparseVoxelDet: Fully Sparse Voxel Networks for Efficient Event-Based Drone Detection}

\author[1]{\fnm{Mohamad Yazan} \sur{Sadoun}}\email{mohamad.yazan@ou.edu}

\author[1]{\fnm{Sarah} \sur{Sharif}}\email{s.sh@ou.edu}

\author*[1]{\fnm{Yaser Mike} \sur{Banad}}\email{bana@ou.edu}

\affil*[1]{\orgdiv{Inquire Lab}, \orgname{University of Oklahoma}, \orgaddress{\city{Norman}, \state{OK}, \country{USA}}}

\abstract{Event cameras excel at detecting small, fast drones, but today's detectors give away their key advantage: they convert the sparse event stream into dense grids and pay dense-processing cost on inputs that are almost entirely empty. We present SparseVoxelDet, to our knowledge the first coordinate-sparse 3D event-voxel bounding-box detector: backbone, feature pyramid, temporal reduction, and detection head all operate directly on coordinate-indexed features, with no dense spatial grid at any stage. Building it exposed a hidden failure mode we name \emph{support inflation}: an input filling a median 0.0652\% of the voxel lattice inflates stage by stage until standard pyramid fusion leaves the detection head locally near-dense. We answer with two ideas. Expansion-free inverse-convolution fusion provably creates no active sites beyond the stored backbone supports, cutting head occupancy from a median 78.88\% to 10.53\%; quality-aligned supervision then recovers more accuracy than preserving sparsity costs. The payoff is measured, not assumed: executing the same trained network densely costs a median 27.5$\times$ the work and 4.65$\times$ the latency across 5{,}000 paired frames, with no frame cheaper dense, while the 6.22M-parameter detector reaches 87.01 AP50 on the FRED drone benchmark, ahead of matched dense controls, and holds its lead on the held-out test partition evaluated once. Sparsity, preserved by construction and supervised well, delivers the efficiency and the accuracy together.}

\keywords{Event camera; Sparse 3D convolution; Drone detection; Object detection; Event-based vision; Sparse voxel representation}

\maketitle

%% file: sections/01_introduction.tex
\section{Introduction}
\label{sec:intro}

Detecting small unmanned aerial vehicles (drones) at low latency is critical for airspace security, yet conventional RGB-based detectors struggle under low-light, high-dynamic-range, and cluttered-sky conditions~\citep{fred}. Event cameras, neuromorphic sensors that asynchronously report per-pixel brightness changes with microsecond latency and $>$120\,dB dynamic range~\citep{lichtsteiner2008128}, are increasingly used for this task~\citep{gallego2020event}. On the FRED drone benchmark, event input outperforms RGB input by more than 50 AP50 points under the published protocol~\citep{fred}: the event stream carries the signal.

Converting that stream into dense frame-like grids, however, discards the sensor's most distinctive property: \emph{extreme sparsity}. On FRED, typical event inputs occupy less than 1\% of the voxel lattice, yet dense detectors process all 409{,}600 pixel positions after resizing to $640{\times}640$. In the LiDAR domain, sparse 3D convolution backbones such as SECOND~\citep{yan2018second} and frameworks including PointPillars~\citep{lang2019pointpillars} and CenterPoint~\citep{yin2021center} progressively reduced this reliance on dense representations, and VoxelNeXt~\citep{voxelnext2023} showed that point-cloud detection can run entirely through sparse 3D convolutions with no dense bird's-eye-view map. Event-camera voxel grids share the same structure, 3D tensors (temporal bins $\times$ height $\times$ width) with low occupancy, which raises our central question: \emph{can an object detector process event data natively in the sparse domain, end-to-end, without constructing a dense spatial feature grid?}

We answer it with SparseVoxelDet, to our knowledge the first documented coordinate-sparse 3D event-voxel bounding-box detector: backbone, feature pyramid, temporal reduction, and detection head all operate on coordinate-indexed features without constructing a dense spatial H$\times$W or T$\times$H$\times$W feature grid. We scope the claim to bounding-box detection deliberately, since sparse 3D event pipelines exist for segmentation and centroid localization~\citep{murray2025propeller}, and graph-based detectors predict boxes from sparse event graphs without a voxel grid~\citep{verma2026egsmv}. A preliminary version of this work appeared as a preprint~\citep{sadoun2026preliminary}; this paper extends it with the source-complete label contract, quality-aligned supervision, and the stage-wise support analysis. Existing event detectors either convert events to dense spatial grids before applying conventional architectures~\citep{perot2020learning,gehrig2023rvt,lv2024rtdetr}, or exploit sparsity at the token or activation level while the underlying tensors remain dense-shaped~\citep{peng2024sast,yang2025smamba,wang2025sparse,fan2024sfod} (Table~\ref{tab:sparsity_taxonomy}). SparseVoxelDet instead reframes event-camera detection as a native sparse-computation problem: a SparseSEResNet backbone extracts multi-scale features at strides 4, 8, and 16 through 3D sparse convolutions with element-wise residual connections and squeeze-and-excitation attention; a sparse feature pyramid fuses the scales into a stride-4 map; temporal max-pooling collapses time; and an anchor-free head~\citep{tian2019fcos} predicts classification, left-top-right-bottom box regression, and box quality at each active position.

Pursuing the question produced a sharper one. Coordinate sparsity is usually justified by input occupancy, but a detector pays its cost in the late stages, not at its input. Measuring the realized support at every stage over the full 103{,}672-frame development-validation split shows occupancy growing monotonically even where the active-site count falls: input occupancy is a median 0.0652\%, the fused pyramid reaches a median 68.86\% under transpose fusion, and the final block realizes a median 24.54 of its 27 kernel taps. The pyramid the detector needs in order to localize is precisely what erodes the sparsity that motivated it; we term this stage-by-stage erosion \emph{support inflation}: the multiscale, detector-level manifestation of the dilation problem known from submanifold sparse convolutions~\citep{graham2018sparse}, here compounded by top-down fusion. The measured anatomy yields the design thesis that organizes the paper: \emph{a coordinate-sparse representation is not a compute guarantee: sparsity must be preserved by construction, and its accuracy must be earned by supervision.}

\input{figures/sparse_vs_dense}

Both halves are addressed. Replacing top-down transpose upsampling with \emph{expansion-free inverse-convolution fusion} creates no sites beyond the stored backbone supports and returns the detection head from a median 78.88\% to a median 10.53\% occupancy. Supervision then does the accuracy work: replacing center-only positive assignment with \emph{quality-aligned supervision} (task-aligned inside-box candidate selection, soft quality targets, and a decoded-IoU quality head) trains the same expansion-free architecture to 87.01 AP50 and 43.14 AP50:95 on our source-complete 147/37 development protocol, a 1.49 AP50 gain over center-only supervision at matched optimizer topology. Architecture, data, labels, evaluator, and optimizer topology are held fixed across the two supervision regimes (Section~\ref{sec:experiments}). The controlled dense comparator (YOLO11n trained on the same source-complete training labels and scored on the same frames with the same evaluator) reaches 77.29 AP50 and 38.12 AP50:95 letterboxed; retrained on the same anisotropic resize our model consumes, it reaches 84.68, leaving a $+$2.33 development margin (Section~\ref{sec:main_results}). Evaluated once on FRED's canonical test partition after every decision was frozen, SparseVoxelDet reads 84.33 AP50 against the control's 79.37 (Section~\ref{sec:main_results}).
Our contributions are threefold:
\begin{enumerate}[leftmargin=*]
  \item \emph{Coordinate-sparse architecture.} We introduce SparseVoxelDet, an event-camera detector whose spatial feature path constructs no dense H$\times$W or T$\times$H$\times$W grid (Table~\ref{tab:sparsity_taxonomy}).
  \item \emph{A measured anatomy of sparsity.} We profile realized support at every stage over the full 37-sequence development-validation split, and exact per-operator work counts over 5{,}000 paired frames spanning 36 of those sequences. Support inflation raises occupancy monotonically from a median 0.0652\% at the input to a median 68.86\% in the fused pyramid, concentrating cost exactly where support has become locally near-dense. Expansion-free inverse-convolution fusion removes this expansion by construction, reducing fusion-stage work by 91.3\% and whole-model work by 70.1\% against \texttt{SparseConvTranspose3d} top-down fusion within our architecture family. A separate, numerically matched dense-execution control then isolates what coordinate-sparse execution itself buys: sparse execution is cheaper on all 5{,}000 paired frames profiled, by a median 27.5$\times$ in work and 4.65$\times$ in latency, and the margin tracks occupancy exactly as the thesis predicts (Section~\ref{sec:efficiency}).
  \item \emph{Quality-aligned sparse supervision.} We show that quality-aligned supervision recovers more accuracy than preserving sparsity costs: task-aligned inside-box assignment with soft quality targets and a decoded-IoU quality head delivers 1.49 AP50 over center-only supervision on identical architecture, data, labels, evaluator, and optimizer topology, making the expansion-free path the most accurate of the sparse configurations ablated (Table~\ref{tab:objective_fusion}). The topology-matched control behind this gain is detailed in Sections~\ref{sec:experiments} and~\ref{sec:limits}.
\end{enumerate}

Two audited artifacts accompany these contributions: the source-complete FRED label-conversion stage, which regenerates every reported label set from the original release, and the stage-wise support profiler behind every occupancy measurement in this paper, which is generic to sparse-convolution networks and can quantify support inflation in any coordinate-sparse pipeline (Section~\ref{sec:repro}).

%% file: figures/sparse_vs_dense.tex

\begin{figure}[t]
\centering
\begin{tikzpicture}[
    cell/.style={minimum size=0.28cm, inner sep=0pt},
]

\begin{scope}
    \node[font=\footnotesize\sffamily\bfseries, anchor=south] at (2.24,4.0) {Dense Processing};

    \foreach \x in {0,...,15} {
        \foreach \y in {0,...,15} {
            \fill[gray!10] (\x*0.28, \y*0.25) rectangle ++(0.26, 0.23);
            \draw[gray!20, line width=0.15pt] (\x*0.28, \y*0.25) rectangle ++(0.26, 0.23);
        }
    }

    \foreach \x/\y in {4/12,5/12,5/11,6/11,6/12,7/11,7/12,8/11,4/13,5/13,
                        9/5,9/6,10/5,13/3,13/4,2/8} {
        \fill[blue!65] (\x*0.28, \y*0.25) rectangle ++(0.26, 0.23);
    }

    \draw[green!60!black, thick, rounded corners=0.5pt]
        (3.5*0.28, 10.5*0.25) rectangle ++(5.5*0.28, 3.5*0.25);
    \node[font=\tiny\sffamily, green!50!black, anchor=south west]
        at (9*0.28, 13.5*0.25) {drone};

    \node[font=\scriptsize\sffamily, text=black!65, anchor=north, align=center]
        at (2.24,-0.3) {409{,}600 pixels processed};
    \node[font=\scriptsize\sffamily, text=black!65, anchor=north, align=center]
        at (2.24,-0.7) {median 0.0652\% of the\\voxel lattice active};
\end{scope}

\node[font=\large\sffamily, black!50] at (5.2,2.0) {$\Longrightarrow$};

\begin{scope}[shift={(6.0,0)}]
    \node[font=\footnotesize\sffamily\bfseries, anchor=south] at (2.24,4.0) {Sparse Processing};

    \draw[gray!15, dashed] (0,0) rectangle (4.48,4.0);

    \foreach \x/\y in {4/12,5/12,5/11,6/11,6/12,7/11,7/12,8/11,4/13,5/13,
                        9/5,9/6,10/5,13/3,13/4,2/8} {
        \fill[blue!65] (\x*0.28+0.13, \y*0.25+0.115) circle (0.1);
    }

    \draw[green!60!black, thick, dashed, rounded corners=0.5pt]
        (3.5*0.28, 10.5*0.25) rectangle ++(5.5*0.28, 3.5*0.25);

    \node[font=\scriptsize\sffamily, draw=blue!30, rounded corners=2pt,
          fill=blue!4, inner sep=3pt, anchor=north] at (2.24, -0.2)
        {$\mathbf{C} \in \mathbb{Z}^{M \times 3}$,\;
         $\mathbf{F} \in \mathbb{R}^{M \times 6}$};

    \node[font=\scriptsize\sffamily, text=blue!65!black, anchor=north, align=center]
        at (2.24,-0.85) {\textbf{Only active voxels ($<$1\% of grid)}};
    \node[font=\scriptsize\sffamily, text=blue!50!black, anchor=north]
        at (2.24,-1.2) {95.88$\times$ fewer input coordinates (median)};
\end{scope}

\end{tikzpicture}
\caption{Dense vs.\ coordinate-sparse processing. Left: a dense detector processes 409{,}600 pixels at $640^2$. Right: SparseVoxelDet input has a median 4{,}272 active coordinates, 95.88$\times$ fewer than dense input pixels on a median-count basis. Under transpose fusion, pyramid expansion and temporal pooling grow support to a median 20{,}194.5 of 25{,}600 stride-4 sites (78.88\%); the expansion-free fusion of Section~\ref{sec:fpn} returns the detection head to a median 10.53\% occupancy, with the full distribution and its tail reported in Section~\ref{sec:efficiency}. These full-development-split, stage-qualified counts describe topology, not FLOPs or memory.}
\label{fig:sparse_vs_dense}
\end{figure}

%% file: sections/02_related_work.tex
\section{Related Work}
\label{sec:related}

\subsection{Event-Based Object Detection}

Event cameras~\citep{lichtsteiner2008128} report pixel-level brightness changes asynchronously, producing streams with microsecond temporal resolution and high dynamic range~\citep{gallego2020event}.

Dense conversion methods. The predominant approach converts events into frame-like representations (event histograms~\citep{detournemire2020large}, time surfaces, or voxel grids~\citep{zhu2019unsupervised}) and applies conventional dense detectors: large-scale histogram-based detection~\citep{perot2020learning}, the recurrent vision transformer RVT~\citep{gehrig2023rvt}, learned event representations~\citep{gehrig2019end}, and the cross-attention RGB+event fusion detector ER-DETR~\citep{erdetr2024}. \citet{messikommer2020event} convert such models into asynchronous sparse ones, but training remains synchronous and detection applies 2D spatial sparse convolutions with a YOLO output layer rather than an end-to-end sparse 3D voxel pipeline. The dense-conversion methods above inherit the cost of dense convolutions, processing every spatial position regardless of event occupancy.

The FRED benchmark~\citep{fred} reports event-input AP50/AP50:95 of 87.68/49.25 for YOLOv11, 85.00/43.40 for Faster R-CNN, 82.05/38.98 for RT-DETR, and 68.42/21.80 for the event-only arm of ER-DETR, whose RGB+event fusion configuration reaches 78.59/32.21. The published record omits the YOLOv11 variant, input resolution, training recipe, and evaluator implementation; we therefore treat 87.68 as an external published reference and base every controlled claim on a dense baseline we train and score ourselves (Section~\ref{sec:experiments}). SMG-UAV~\citep{zhang2026smguav} reruns event-only RVT, SAST, and SMamba under one common but not evaluator-matched protocol, reporting 79.3, 77.9, and 81.7 AP50; its higher 89.3 AP50 result uses RGB+event fusion. A census of the ten scholarly works using FRED\footnote{Search snapshot dated 14 July 2026, enumerating every indexed work citing the FRED dataset paper. Nine of the ten were read in full; no statement or number in this paper rests on the tenth (AHM-Net, ICASSP 2026, \url{https://doi.org/10.1109/ICASSP55912.2026.11464096}).} found, among the works read in full, no independent event-only detector that reproduces or exceeds FRED's 87.68 AP50 under a disclosed protocol.

Sparse event processing. Sparsity enters event-based detectors through three mechanisms, summarized in Table~\ref{tab:sparsity_taxonomy}. \emph{Token sparsity} selects which locations participate in attention while the tensors remain dense-shaped, as in SAST~\citep{peng2024sast} and SMamba~\citep{yang2025smamba}, both retaining dense pyramids and heads. \emph{Activation sparsity} exploits zeros inside dense-format tensors: SCR~\citep{wang2025sparse} reports over 92\% activation sparsity from thresholded activations on standard convolutions, with an explicitly dense recurrent state. \emph{Structural sparsity} stores coordinate-indexed tensors and computes only on occupied sites, as in 2D submanifold convolutions~\citep{messikommer2020event} and the event-graph network AEGNN~\citep{schaefer2022aegnn}, but both terminate in a dense-grid output (verified in the official implementations); the more recent eGSMV~\citep{verma2026egsmv} predicts boxes from a spatiotemporal event multigraph, a structurally sparse graph representation rather than a voxel grid. Closest in representation, \citet{murray2025propeller} run a sparse 3D UNet on $(t,x,y)$ event voxels for propeller segmentation and centroid tracking, producing per-voxel confidences rather than bounding boxes, with no feature pyramid or detection head. To our knowledge, SparseVoxelDet maintains structural sparsity on the 3D event voxel grid through backbone, feature pyramid, temporal reduction, and detection head without constructing a dense spatial H$\times$W or T$\times$H$\times$W feature grid.

\input{tables/tab_sparsity_taxonomy}

Spiking and hybrid methods. Spiking neural networks process events through spike trains, preserving temporal sparsity during computation, including EAS-SNN~\citep{eas_snn}, spiking detection backbones~\citep{cordone2022object}, and SFOD~\citep{fan2024sfod}, and SEW-ResNet~\citep{fang2021sew} contributes the spike-element-wise residual connection that forms the basis of our backbone. These architectures still operate on dense grid tensors: EAS-SNN aggregates events into dense count tensors at input and obtains its best accuracy with non-spiking pyramid and head stages, while SFOD decodes fixed-grid binary spikes to dense floating-point maps before its SSD head. We use the SEW residual pattern with sparse 3D convolutions rather than dense spiking neurons.

\subsection{Sparse 3D Convolutions for Point-Based Detection}

Sparse convolutions~\citep{graham2018sparse}, implemented efficiently in spconv~\citep{spconv2022}, process only occupied voxel positions and are well-established for LiDAR 3D detection~\citep{yan2018second,lang2019pointpillars,yin2021center}, largely displacing direct point-set processing~\citep{qi2017pointnet,qi2017pointnetplusplus}. In 2D, Sparse R-CNN~\citep{sun2021sparse} replaced dense feature maps with learnable proposals but relies on a dense backbone. VoxelNeXt~\citep{voxelnext2023} showed that LiDAR detection can be performed \emph{fully sparsely}, predicting boxes directly from sparse voxel features without dense bird's-eye-view conversion, which motivates testing the same premise on event cameras, where FRED input occupancy is below 1\% (Section~\ref{sec:voxelization}).

Our sparse feature pyramid adapts FPN~\citep{lin2017fpn} to the sparse domain with inverse convolutions that restore coarse features exactly onto stored fine-level supports, propagating features only to positions the backbone already occupies, unlike transpose-convolution upsampling, which dilates the active set at every fusion stage (Section~\ref{sec:fpn}). Support growth under repeated sparse convolutions is the dilation problem that motivated submanifold operators~\citep{graham2018sparse}; what we measure and prevent is its multiscale-decoder form.

\subsection{Label Assignment and Quality Estimation}

Dense-detector research has established that \emph{which} positions are supervised as positives, and \emph{what} score they are trained to predict, matter as much as the architecture: positive-sample selection explains much of the anchor-based/anchor-free gap~\citep{zhang2020atss}, dynamic per-object quotas improve assignment~\citep{ge2021yolox}, and task alignment ties classification confidence to localization quality~\citep{feng2021tood}. On a coordinate-sparse detector the problem changes character: candidates exist only at active sites, small objects may contain few or zero active sites, and the head is pointwise (Section~\ref{sec:head}), so supervision is the only channel through which box quality can be taught. Section~\ref{sec:loss} adapts these ideas to the sparse regime, and Section~\ref{sec:results} shows the resulting gain, to our knowledge the first demonstration that quality-aligned supervision transfers to coordinate-sparse event detection.

\subsection{Drone Detection}

Drones are small (often only tens of pixels), move fast, and operate under variable lighting, where RGB detection struggles with low contrast and motion blur~\citep{fred}. The FRED dataset~\citep{fred} provides synchronized RGB and event data across more than seven hours of annotated recording per modality, spanning varying altitude, weather, and illumination. It publishes a canonical 80/20 recording-level split and a challenging split that keeps the same scenarios on both sides while shifting the data distribution, and establishes detection, tracking, and trajectory-forecasting benchmarks. Under the published protocol, the reported YOLOv11 event-frame result is 87.68 AP50 versus 35.24 AP50 for RGB, and the dense detectors fall into a 34--36 AP50 band on RGB while the best exceeds 87 on events.

%% file: tables/tab_sparsity_taxonomy.tex
\begin{table}[t]
\centering
\caption{Where dense spatial grids enter event-based detectors (and the closest LiDAR analogue). B/N/H: backbone / neck / head operate on \emph{structurally} sparse coordinate-indexed features. ``Input'' means a dense spatial grid is constructed from the raw events before learned computation. $^{\dag}$Dense output verified in the official implementations: sparse features are converted to a dense map before a fully-connected YOLO layer (Messikommer et al.), or graph nodes are max-pooled onto a fixed grid before a linear YOLO output (AEGNN). $^{*}$Sparse concatenation of downsampled stages; no multi-scale pyramid.}\label{tab:sparsity_taxonomy}
\footnotesize
\setlength{\tabcolsep}{4pt}
\begin{tabular}{@{}llcccl@{}}
\toprule
Method & Sparsity mechanism & B & N & H & Dense spatial grid enters \\
\midrule
RVT~\citep{gehrig2023rvt} & -- (dense baseline) & $\times$ & $\times$ & $\times$ & input \\
SAST~\citep{peng2024sast} & token/window masking & $\times$ & $\times$ & $\times$ & input \\
SMamba~\citep{yang2025smamba} & token dropping & $\times$ & $\times$ & $\times$ & input \\
SFOD~\citep{fan2024sfod} & activation (spiking) & $\times$ & $\times$ & $\times$ & input \\
EAS-SNN~\citep{eas_snn} & activation (spiking) & $\times$ & $\times$ & $\times$ & input \\
SCR~\citep{wang2025sparse} & activation (thresholded) & $\times$ & $\times$ & $\times$ & input \\
Messikommer et al.~\citep{messikommer2020event} & structural, 2D submanifold & \checkmark & -- & $\times$ & head$^{\dag}$ \\
AEGNN~\citep{schaefer2022aegnn} & structural, event graph & \checkmark & -- & $\times$ & head$^{\dag}$ \\
VoxelNeXt~\citep{voxelnext2023} (LiDAR) & structural, 3D sparse conv & \checkmark & \checkmark$^{*}$ & \checkmark & none \\
\textbf{SparseVoxelDet (ours)} & structural, 3D sparse conv & \checkmark & \checkmark & \checkmark & none \\
\bottomrule
\end{tabular}
\end{table}

%% file: sections/03_method_motionaware.tex
\section{Method: SparseVoxelDet}
\label{sec:method}

SparseVoxelDet keeps spatial feature computation indexed by active coordinates, constructing no dense spatial H$\times$W or T$\times$H$\times$W feature grid. The \emph{structural support} of a sparse tensor is the set of coordinates stored in its index array, irrespective of whether a stored feature vector is zero; an \emph{active position} is a member of that support. Figure~\ref{fig:architecture} illustrates the pipeline.

\input{figures/architecture_sparsevoxeldet}

\subsection{Event Voxelization}
\label{sec:voxelization}

Given a stream of asynchronous events $\{(x_i, y_i, t_i, p_i)\}$ within a time window, we construct a sparse voxel grid in the discretized-event-volume family of~\citet{zhu2019unsupervised}, hard-binned: each event is assigned to a single temporal bin rather than linearly weighted across two. The temporal axis is discretized into $T$ uniform bins at a spatial resolution of $H \times W$ pixels.

Per-voxel features. Each active voxel at $(t, y, x)$ stores a $C_\text{in}$-channel vector computed from the events in that bin. The minimal representation uses $C_\text{in}=2$ channels, the positive and negative event counts $(c^+, c^-)$; the richer one uses $C_\text{in}=6$ channels that add recency and temporal variation per polarity:
\begin{equation}
    \mathbf{f} = \bigl[\underbrace{\log(1{+}c^+),\;\log(1{+}c^-)}_{\text{count}},\;
    \underbrace{r^+,\; r^-}_{\text{recency}},\;
    \underbrace{\sigma^+_t,\; \sigma^-_t}_{\text{temporal std}}\bigr],
\end{equation}
where $\log(1{+}c^{\pm})$ compresses high-activity counts; $r^{\pm} = \exp(-\lambda (t_{\max} - t_{\text{last}}^{\pm}))$ are exponential recency features, with $t_{\max}$ the maximum event timestamp in the current time window and $\lambda = 5/\Delta t_{\text{range}}$, where $\Delta t_{\text{range}}$ is the timestamp span of the events actually retained in that frame, floored at $1\,\mu$s, so the oldest retained event always decays to $e^{-5} \approx 0.007$ and recency is measured against the frame's own activity. The $\sigma^{\pm}_t$ are temporal standard deviations of event timestamps normalized by the same span. The estimator, the collision and outlier rules, and the conventions for voxels holding one event or a single polarity are read from the pinned voxel-construction source (Section~\ref{sec:repro}).

The resulting representation is a \texttt{SparseConvTensor} with spatial shape $[T, H, W]$ and $C_\text{in}$ channels, containing $M_0$ active voxels. At the native sensor resolution of $1280 \times 720$ with $T=16$ bins the grid holds ${\sim}14.7 \times 10^6$ possible positions, of which $<$1\% are typically occupied. Section~\ref{sec:preprocessing} details the input configuration used.

\subsection{SparseSEResNet Backbone}
\label{sec:backbone}

The backbone is a sparse 3D convolutional residual network whose element-wise residual connections follow SEW-ResNet~\citep{fang2021sew}, adapted to operate entirely through sparse convolutions~\citep{he2016deep}; it uses no spiking neurons, the residual pattern being adopted for its gradient-friendly properties in deep sparse networks. The \texttt{nano\_deep} configuration has a base channel width of 32 with block depths $[2, 2, 2, 1]$ across four stages, and squeeze-and-excitation (SE) attention~\citep{hu2018squeeze} on the final two stages, whose squeeze averages over each sample's own active sites rather than over the batch.

Architecture. A sparse convolutional stem with stride $(1, 2, 2)$ maps the $C_\text{in}$-channel input to 32 channels. Four stages of sparse residual blocks follow with per-stage strides $(1, 2, 2, 2)$, producing cumulative spatial strides 2, 4, 8, and 16 from the original input, with channels 32, 64, 128, 256. Each residual block consists of two sparse convolution layers, both \texttt{SubMConv3d} (submanifold sparse convolution) in stride-1 blocks, while in a block that changes stride the second is a strided \texttt{SparseConv3d}, with per-site layer normalization over the channel vector and ReLU activation, connected by an element-wise residual addition on the sparse feature tensors. Normalizing each active site independently avoids the batch and spatial statistics that a sparse tensor's varying support makes ill-defined; the head uses a different normalization (Section~\ref{sec:head}). At stride boundaries, where input and output dimensions differ, a $1 \times 1$ sparse convolution with stride acts as the shortcut.

The backbone outputs multi-scale features at three levels: $\{C_2, C_3, C_4\}$ with stride $\{4, 8, 16\}$ and channels $\{64, 128, 256\}$. Spatial features remain coordinate-indexed as sparse strided convolutions transform the active set.

\subsection{Expansion-Free Sparse Feature Pyramid}
\label{sec:fpn}

We adapt the Feature Pyramid Network~\citep{lin2017fpn} to the sparse domain, fusing multi-scale features into a single stride-4 map for detection of small objects. How coarse features reach fine resolution controls whether the pyramid preserves the support sparsity the backbone delivers.

Lateral connections. Each backbone output $C_k$ is projected to a common channel dimension ($d=128$) through a $1 \times 1$ sparse submanifold convolution followed by the backbone's per-site normalization, also used in the upsampling and output blocks:
\begin{equation}
    P_k = \text{SparseNorm}(\text{SubMConv3d}_{1\times1}(C_k)), \quad k \in \{2, 3, 4\}.
\end{equation}

Expansion-free top-down pathway. Features flow from coarse to fine through sparse \emph{inverse} convolutions (\texttt{SparseInverseConv3d}), each paired to one of the backbone's stride-$(1, 2, 2)$ downsampling convolutions through a shared indice key that fixes its geometry. An inverse convolution restores features exactly onto the stored support of the corresponding earlier stage, creating no output site that the backbone did not already occupy. Fused features at each level are obtained via sparse element-wise addition:
\begin{align}
    P_3' &= P_3 + \text{SparseInverse}(P_4), \\
    P_2' &= P_2 + \text{SparseInverse}(P_3').
\end{align}
The fused active set is therefore bounded by the stored backbone supports, a property of coordinate sets rather than of learned weights.

\begin{proposition}[Support preservation]
\label{prop:support}
Fix a batch element and let $S_k \subset \mathbb{Z}^3$ be the structural support of backbone level $k$. Let $D_k$ denote the cached indice data of the strided convolution from level $k{-}1$ to level $k$: its ordered input and output coordinate rows and their pair tables. If the input to \texttt{SparseInverseConv3d} carries its coordinate rows in the cached forward-output order of $D_k$, the layer reuses the reversed pair tables and sets its output coordinate rows to the cached forward-input rows; hence $\mathrm{coords}(\mathrm{SparseInverse}_{D_k}(X)) = S_{k-1}$, independent of its weights. Define the top-down recursion $Q_4 = P_4$ and $Q_{k-1} = P_{k-1} + \mathrm{SparseInverse}_{D_k}(Q_k)$ for $k = 4, 3$, where both addition operands are aligned to the cached $S_{k-1}$ row order and the fused tensor is rebuilt on that order carrying the $D_{k-1}$ indice data. Since the lateral $1{\times}1$ submanifold projection preserves coordinate rows, $\mathrm{coords}(P_{k-1}) = S_{k-1}$ and each fused level has structural support exactly $S_{k-1}$; the fused stride-4 support equals $S_2$.
\end{proposition}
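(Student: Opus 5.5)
The argument is an induction down the pyramid levels. It uses only facts about coordinate arrays, never about feature values, so every step is a statement about index rows.

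\textbf{Base facts.} First I would record the two primitive facts the statement leans on, each read off the operator semantics.

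\emph{Inverse convolution.} When \texttt{SparseInverseConv3d} is called with indice key $D_k$, it does not compute an output coordinate set from the kernel geometry. It looks up the cached forward pass and takes the forward-input rows as output rows and the forward pair tables with input and output roles swapped as its gather/scatter map. The hypothesis that $X$ carries its rows in the cached forward-output order of $D_k$ guarantees that the swapped pair tables index $X$ correctly. Hence $\mathrm{coords}(\mathrm{SparseInverse}_{D_k}(X))$ is the forward-input row set of the strided convolution from level $k{-}1$, which is $S_{k-1}$ by definition of $D_k$. The weights enter only the feature rows, so this holds independently of the weights.

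\emph{Lateral projection.} A $1{\times}1$ submanifold convolution, like \texttt{SparseNorm}, maps the row array to itself. So $\mathrm{coords}(P_k)=\mathrm{coords}(C_k)=S_k$, and the row order is unchanged.

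\textbf{Induction.} I would induct on $k=4,3$ with the invariant: $Q_k$ has structural support $S_k$, its rows are in the cached order of $S_k$, and it carries the $D_k$ indice data.

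For the base case, $Q_4=P_4$ has support $S_4$ by the lateral fact. Its rows are those of $C_4$, which is exactly the forward-output of the level-3-to-4 strided convolution, so it carries $D_4$.

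For the step, the invariant makes $Q_k$ an admissible input to $\mathrm{SparseInverse}_{D_k}$. By the first fact, the output has coordinates $S_{k-1}$ in the cached forward-input order. $P_{k-1}$ has the same row array by the second fact. The sparse addition of two tensors on identical aligned row arrays is a row-wise sum, so its support is exactly $S_{k-1}$, with no union and no new sites. Rebuilding the sum on that order with the $D_{k-1}$ data restores the invariant at level $k-1$. Running the step for $k=4$ and then $k=3$ gives that $Q_2$, the fused stride-4 map, has support $S_2$.

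\textbf{Main obstacle.} The delicate step is the invariant's ordering and indice-data clause. Set equality alone is not enough: the inverse layer is only well defined when its input rows match the cached output order, and the addition must be positional rather than a coordinate-hashed union. I would handle this by carrying row order explicitly in the invariant. For level $k-1$ with $k-1\ge 2$, I would note that the rebuilt $S_{k-1}$ rows are precisely the forward-output rows of $D_{k-1}$, since $S_{k-1}$ was produced by that strided convolution. This is what makes the next inverse call admissible.

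The rest follows directly from the operator semantics. I would remark that the per-batch-element statement follows because the batch index is a coordinate column and is preserved throughout.
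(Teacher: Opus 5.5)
Your skeleton matches the paper's proof: the inverse layer takes its output rows from the cache, the lateral projection preserves rows, aligned addition stays on $S_{k-1}$, and the argument recurses over $k=4,3$. There is a genuine gap, though, at the step you flag as the main obstacle: you settle it by assertion, and the assertion need not hold for this backbone.

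In the base case you say the rows of $C_4$ are ``exactly the forward-output of the level-3-to-4 strided convolution.'' In the step you say the rebuilt $S_{k-1}$ rows are the forward-output rows of $D_{k-1}$ ``since $S_{k-1}$ was produced by that strided convolution.'' A backbone level is not the raw output of its strided convolution. In each stride-changing block, the strided convolution is followed by the residual addition with the strided $1\times1$ shortcut and, at stages 3--4, by squeeze--excitation; stage 3 also has a further block after it. The paper notes that these operations can leave a stage's active set a proper subset of the stride-changing convolution's cached output rows, or in a different order. In that case $Q_4=P_4$ need not be an admissible input to $\mathrm{SparseInverse}_{D_4}$. Likewise, the fused level-3 tensor, rebuilt on the forward-input order of $D_4$ (the stage-3 output order), need not be in the forward-output order of $D_3$. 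Your invariant's ordering and indice-data clause is therefore exactly what is unproven for both inverse calls.

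The missing ingredient is that the precondition is established at the backbone boundary rather than inherited from the strided convolution. The paper rebuilds each level output that an inverse will consume (the stride-8 and stride-16 levels) onto the row order captured at that stage's stride-changing convolution before it leaves the backbone. Any dropped row is restored carrying a zero feature vector. Because a zero feature vector keeps its row, the restored rows belong to the structural support $S_k$. So after the rebuild, $S_k$ coincides as an ordered array with the forward-output rows of $D_k$. It also coincides with the forward-input rows of $D_{k+1}$, since the next stage consumes the rebuilt tensor through a row-preserving submanifold convolution before its strided one. Only with this construction in place does your induction go through at $k=4$ and at $k=3$.
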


The proof, with two scope remarks (on which operator's indice data defines the coordinate map, and on the proposition formalizing the library's indice-cache contract rather than a library-independent fact), appears in Appendix~\ref{app:proof}.

For any fusion operator at level $k$ with nonempty fused support we define the \emph{support-inflation factor} $\iota_k = |\mathrm{coords}(\text{fused})| / |S_k|$ and the \emph{extraneous-support fraction} $\varepsilon_k = |\mathrm{coords}(\text{fused}) \setminus S_k| / |\mathrm{coords}(\text{fused})|$. Proposition~\ref{prop:support} gives $\iota_k = 1$ and $\varepsilon_k = 0$ for the fused levels $k \in \{2, 3\}$ by construction; transpose fusion can yield $\iota_k > 1$ because every overlapping kernel tap generates an output site, and Section~\ref{sec:efficiency} measures its realized cumulative effect on the detection head. As an implementation certificate we assert coordinate equality between the fused pyramid and the stored backbone supports at runtime over the development-validation split (103{,}672 frames checked, zero violations observed).

Transpose-fusion baseline. The conventional sparse alternative upsamples with transpose convolutions (\texttt{SparseConvTranspose3d}), which generate an output site for every kernel tap that overlaps an occupied input site, so support \emph{expands} at each fusion stage. Section~\ref{sec:efficiency} measures the consequence: under transpose fusion the post-pool detection head reaches a median 78.88\% occupancy, against a median 10.53\% under expansion-free fusion, a 7.5$\times$ gap. We retain the transpose variant as a controlled baseline in Section~\ref{sec:results}.

Output refinement. A $3 \times 3$ sparse submanifold convolution refines the fused stride-4 features into the final map $F \in \mathbb{R}^{M \times d}$, where $M$ is the number of active positions. We fuse to a single stride-4 level rather than maintaining multi-scale outputs because small targets dominate FRED: three quarters of matched targets measure at most 41.07~px in maximum dimension (Section~\ref{sec:qualitative}), and at stride~4 the grid remains fine enough to resolve them ($320 \times 180$ at native $1280 \times 720$, $160 \times 160$ at the $640^2$ primary resolution).

\subsection{Temporal Squeeze}
\label{sec:temporal_squeeze}

The backbone and FPN produce 3D sparse features with indices $(b, t, y, x)$, while detection requires 2D predictions. We collapse the temporal dimension by grouping all active voxels that share the same $(b, y, x)$ position and max-pooling over their features:
\begin{equation}
    f_{(b,y,x)} = \max_{t \in \mathcal{T}_{(b,y,x)}} F_{(b,t,y,x)},
\end{equation}
where $\mathcal{T}_{(b,y,x)}$ is the set of temporal bins with active features at spatial position $(y, x)$. This produces $M'$ unique 2D positions with $d$-dimensional features and constructs no dense H$\times$W grid. Max-pooling preserves the strongest activation across time, appropriate for transient moving objects whose event signature may be concentrated in a few temporal bins.

\subsection{Detection Head}
\label{sec:head}

The detection head follows the anchor-free FCOS design~\citep{tian2019fcos}, predicting classification, box regression, and box quality at each active 2D position. Since the temporally-pooled features are a flat tensor of $M'$ position vectors rather than a dense grid, the head is a shared MLP (equivalent to pointwise $1 \times 1$ convolutions) applied per position; spatial context has already been aggregated by the backbone and the FPN.

Shared trunk. Two layers of Linear $\rightarrow$ GroupNorm (8 groups) $\rightarrow$ ReLU, both with hidden dimension $d=128$. This is the one place in the network where features are grouped for normalization (compare Section~\ref{sec:backbone}).

Prediction branches. Three linear heads applied in parallel:
\begin{itemize}[leftmargin=*]
    \item Classification: a raw logit $\hat{c} \in \mathbb{R}^{M' \times 1}$ whose sigmoid is the class probability, initialized so that $\sigma(\hat{c}) = p_0 = 0.01$.
    \item Box regression: $\hat{b} \in \mathbb{R}^{M' \times 4}$, predicting LTRB distances from the position center. The exponential of the raw predictions gives pixel distances.
    \item Box quality: a raw logit $\hat{z} \in \mathbb{R}^{M' \times 1}$, read through a sigmoid. Under quality-aligned supervision (Section~\ref{sec:loss}) this branch is trained to predict the decoded box's IoU with its assigned ground truth; under the center-only baseline it predicts FCOS centerness.
\end{itemize}

Box decoding. Each active position $(y, x)$ at stride $s=4$ has region center $(x \cdot s + s/2,\; y \cdot s + s/2)$, and the LTRB predictions define the box as:
\begin{equation}
    \text{box} = \left[c_x - d_l,\; c_y - d_t,\; c_x + d_r,\; c_y + d_b\right],
\end{equation}
where $(c_x, c_y)$ is the position center and $(d_l, d_t, d_r, d_b) = \exp(\min(\hat{b}, 10))$ are the predicted left, top, right, and bottom distances (LTRB) from the center to the box edges; the clamp before exponentiation prevents half-precision overflow, and decoded boxes are clamped to the image extent.

\subsection{Training Objective}
\label{sec:loss}

The training loss combines three terms with fixed weights $\lambda_\text{cls}{=}1$, $\lambda_\text{reg}{=}2$, $\lambda_\text{ctr}{=}1$:
\begin{equation}
    \mathcal{L} = \lambda_\text{cls} \mathcal{L}_\text{cls} + \lambda_\text{reg} \mathcal{L}_\text{reg} + \lambda_\text{ctr} \mathcal{L}_\text{ctr}.
\end{equation}
Regression is a convex combination of GIoU loss~\citep{rezatofighi2019giou} and a normalized Wasserstein distance term~\citep{wang2021nwd}, $(1{-}w)\,\mathcal{L}_\text{GIoU} + w\,\mathcal{L}_\text{NWD}$ with $w{=}0.5$ and constant $c{=}12.8$; the latter remains informative for the small boxes that dominate FRED where IoU-based losses saturate. What differs between our two supervision regimes is \emph{which positions are trained as positives and what targets they receive}.

Center-only baseline. The conventional FCOS-style assignment marks a position positive when it falls near a ground-truth box center; classification uses focal loss~\citep{lin2017focal} ($\alpha{=}0.25$, $\gamma{=}2.0$) with binary targets and the quality branch regresses positional centerness. This trains the score to reflect \emph{where a position sits}, not \emph{how good its decoded box is}, an anchor-free analogue of the classification--localization misalignment seen in dense detectors~\citep{feng2021tood}.

Quality-aligned supervision. Our final objective aligns supervision with decoded-box quality in three coupled changes, holding architecture, data, labels, evaluator, learning rate, warm-up length, nominal 20-epoch budget, and per-epoch sample exposure fixed, with optimizer topology matched by the rerun of Sections~\ref{sec:experiments} and~\ref{sec:limits}:
\begin{itemize}[leftmargin=*]
    \item \emph{Task-aligned inside-box assignment.} Candidate positives are the active sites inside each ground-truth box, ranked by a detached alignment score $a = \hat{s}^{\,\alpha} \cdot \text{IoU}(\hat{b}, g)^{\beta}$ combining classification confidence $\hat{s}$ and the decoded box's IoU with the ground truth $g$ ($\alpha{=}1$, $\beta{=}6$). A per-object dynamic quota, the floor of the summed IoU of the object's ten highest-IoU candidates (all of them if fewer), clamped to at least one and at most the candidate count, selects the positives. A position claimed by multiple objects is resolved by deferred acceptance, so the outcome does not depend on the order objects appear in the annotation file. What the rule cannot supply is a positive for an object whose interior holds no active site at all: 0.59\% of the boxes the objective received at the selected epoch, against zero boxes losing every candidate to a conflict and a quota fill ratio of exactly one. Both per-epoch counters are part of the released training log (Section~\ref{sec:repro}).
    \item \emph{Soft quality targets.} Selected positives receive soft classification targets proportional to their normalized alignment rather than binary labels, trained through a binary quality focal loss~\citep{li2020gfl} ($\alpha{=}0.25$, modulation exponent 2.0) whose modulating factor is the distance between prediction and soft target. The target equation, its normalization scope, and its detachment rule are read from the pinned objective source (Section~\ref{sec:repro}). During the first two epochs, while decoded boxes are still uninformative, the target \emph{value} bootstraps rather than the assignment rule: the positive class target is blended from a hard label toward its quality value (hard at epoch~0, an equal blend at epoch~1, quality alone from epoch~2) while inside-box task-aligned assignment is in force from the first step.
    \item \emph{Decoded-IoU quality head.} The quality branch is retrained from positional centerness to the detached IoU of each positive's decoded box, so the inference-time product $\sigma(\hat{c}) \cdot \sigma(\hat{z})$ estimates object confidence times box quality, the quantity NMS and AP ranking need.
\end{itemize}
No inference-time component changes. Section~\ref{sec:results} compares this contribution against the center-only baseline on identical architecture, data, labels, and evaluator, epoch by epoch as well as at the selected checkpoint.

\subsection{Inference and NMS}
\label{sec:inference}

At inference, the detection score is $\text{score} = \sigma(\hat{c}) \cdot \sigma(\hat{z})$, predictions below $\tau = 0.05$ are filtered, and non-maximum suppression is applied per frame with IoU threshold 0.5, capped at 100 detections. Algorithm~\ref{alg:inference} summarizes the forward pass.

\begin{algorithm}[!ht]
\caption{SparseVoxelDet Inference}
\label{alg:inference}
\scriptsize
\begin{algorithmic}[1]
\Require Event stream $\mathcal{E} = \{(x_i, y_i, t_i, p_i)\}$ within time window $\Delta t$
\Ensure Set of detections $\mathcal{D} = \{(\text{box}_j, \text{score}_j)\}$
\State \textbf{Voxelization:}
\State $V \gets \textsc{SparseVoxelGrid}(\mathcal{E}, T, H, W)$ \Comment{$[M_0, C_\text{in}]$ sparse tensor}
\State \textbf{Backbone (SparseSEResNet):}
\State $V_0 \gets \textsc{SparseStem}(V)$ \Comment{stride $(1,2,2)$, 32ch}
\For{$k = 1, 2, 3, 4$}
    \State $V_k \gets \textsc{SparseResStage}_k(V_{k-1})$ \Comment{strides 1,2,2,2; channels 32,64,128,256}
    \If{$k \geq 3$}
        \State $V_k \gets \textsc{SqueezeExcite}(V_k)$ \Comment{SE in stages 3--4}
    \EndIf
\EndFor
\State \textbf{Expansion-free SparseFPN (top-down fusion):}
\For{$k = 2, 3, 4$}
    \State $P_k \gets \textsc{SparseNorm}(\textsc{SubMConv}_{1\times1}(V_k))$ \Comment{Project to 128ch; SparseNorm is channelwise LayerNorm over active sites, not batch normalization}
\EndFor
\State $P_3' \gets P_3 + \textsc{SparseInverse}(P_4)$ \Comment{restores stored stride-8 support}
\State $P_2' \gets P_2 + \textsc{SparseInverse}(P_3')$ \Comment{restores stored stride-4 support}
\State $F \gets \textsc{SubMConv}_{3\times3}(P_2')$ \Comment{Refined stride-4 features, $[M, 128]$}
\State \textbf{Temporal squeeze:}
\State $F' \gets \textsc{MaxPool}_t(F)$ \Comment{$[M', 128]$, collapse time dim}
\State \textbf{Detection head:}
\State $H \gets \textsc{MLP}(F')$ \Comment{2-layer shared trunk}
\State $\hat{c}, \hat{b}, \hat{z} \gets \textsc{Cls}(H),\; \textsc{Box}(H),\; \textsc{Qual}(H)$
\State $\text{score} \gets \sigma(\hat{c}) \cdot \sigma(\hat{z})$
\State $\text{boxes} \gets \textsc{DecodeLTRB}(\hat{b}, \text{positions}, \text{stride}=4)$
\State \textbf{Post-processing:}
\State $\mathcal{D} \gets \textsc{NMS}(\text{boxes}[\text{score} > 0.05],\; \text{IoU}=0.5,\; \text{max}=100)$
\State \Return $\mathcal{D}$
\end{algorithmic}
\end{algorithm}

\subsection{Design Discussion}
\label{sec:design_discussion}

The coordinate-sparse premise adapts VoxelNeXt~\citep{voxelnext2023} to event data, but three differences shape the design. The third axis is time, not depth, so the time axis must be collapsed (Section~\ref{sec:temporal_squeeze}) rather than projected to a bird's-eye view. Occupancy is motion-dependent rather than surface-dependent, and event voxels carry brightness-change statistics rather than point geometry, motivating the 6-channel encoding (Section~\ref{sec:voxelization}). Where VoxelNeXt serves large road objects with a single sparse concatenation of downsampled stages, small drones concentrate box evidence at object contours, so we adopt a top-down sparse pyramid (Section~\ref{sec:fpn}) that restores stride-4 resolution before prediction without manufacturing support.

Model size. The full model contains 6.22M parameters.

%% file: figures/architecture_sparsevoxeldet.tex

\begin{figure*}[t]
\centering
\includegraphics[width=\textwidth]{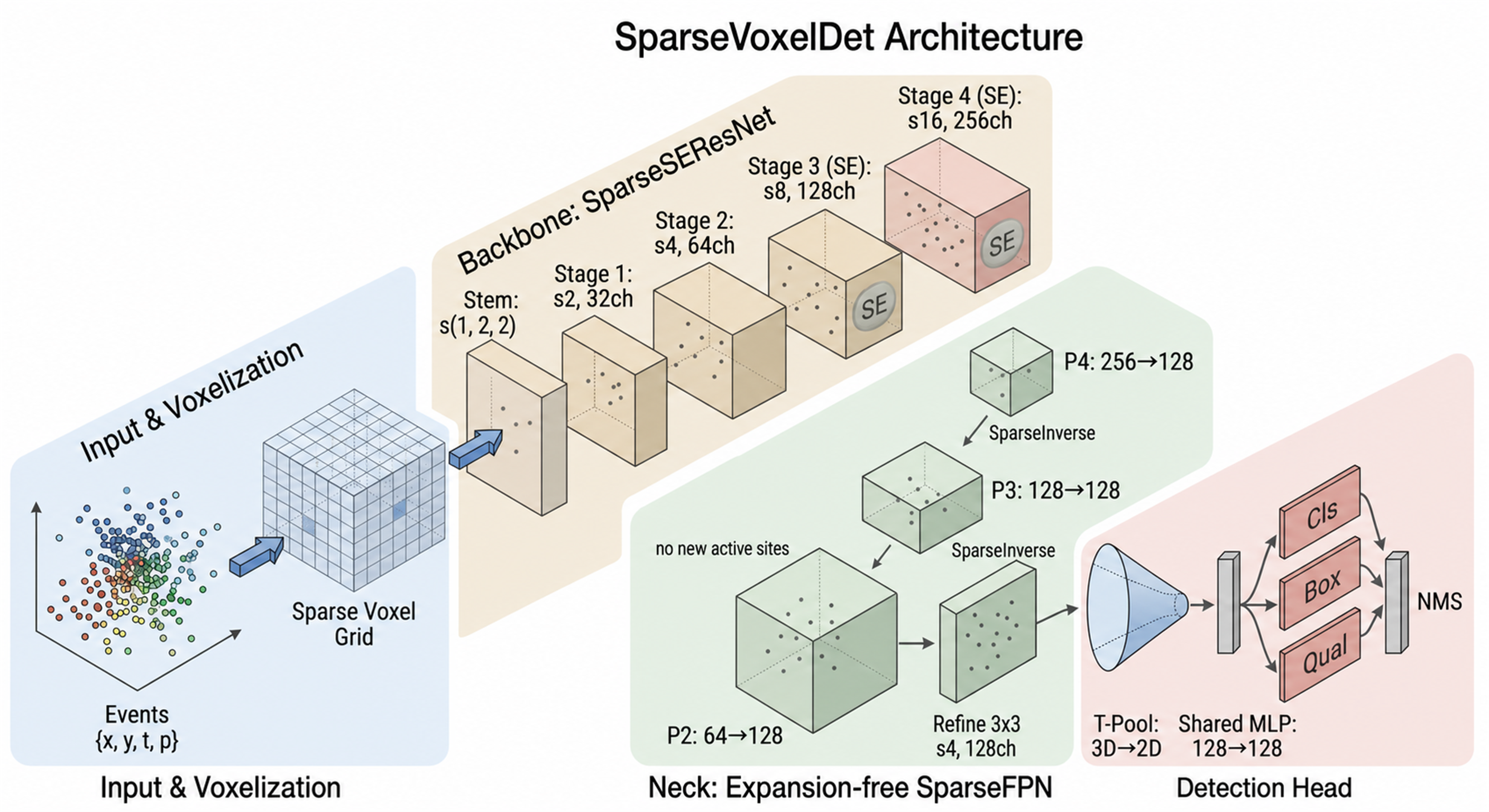}
\caption{SparseVoxelDet architecture.
Events are binned into a sparse 3D voxel grid ($T{\times}H{\times}W$, 6 temporal-surface channels).
The SparseSEResNet backbone processes voxels through four stages of sparse 3D residual blocks
with SE attention at stages~3--4, producing multi-scale features \{$C_2$, $C_3$, $C_4$\}
at strides 4, 8, 16. The expansion-free sparse pyramid fuses scales top-down via sparse lateral projections
and inverse convolutions that restore coarse features onto the stored backbone supports,
yielding a unified stride-4 feature map without creating new active sites.
After temporal max-pooling collapses the time axis, the detection head
(shared MLP) predicts classification, LTRB box regression, and box quality
at each active 2D position, followed by NMS.
The spatial feature path constructs no dense $H{\times}W$ or $T{\times}H{\times}W$ feature grid.}
\label{fig:architecture}
\end{figure*}

%% file: sections/05_experiments.tex
\section{Experiments}
\label{sec:experiments}

\subsection{Dataset: FRED Benchmark}
\label{sec:dataset}

We evaluate on the FRED (Florence RGB-Event Drone) dataset~\citep{fred}, the largest event-camera benchmark for drone detection, providing synchronized RGB and event-camera recordings. The dataset is captured with a Prophesee EVK4 (IMX636) event camera at a native resolution of 1280$\times$720, co-registered with an RGB sensor, and provides more than seven hours of annotated recording per modality. Five commercial and custom drone models serve as targets: Betafpv Air75, DarwinFPV CineApe20, DJI Tello EDU, DJI Mini~2, and DJI Mini~3.

Sequences cover diverse operational conditions: varying altitude (${\sim}$5--80\,m), illumination (daylight, dusk, night), weather (clear, overcast, rain), and background complexity (open sky, buildings, vegetation).

Event cameras maintain usable signal across the full range of lighting conditions, including complete darkness where RGB cameras produce unusable frames. Figure~\ref{fig:fred_overview} illustrates this with paired RGB and event frames spanning daylight, dusk, and near-darkness. The panels are qualitative: accuracy is reported pooled over the development-validation split and is not broken down by illumination condition.

\begin{figure}[t]
\centering
\includegraphics[width=\linewidth]{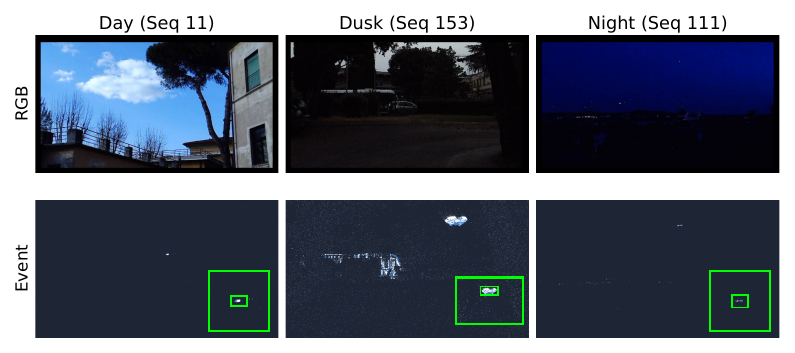}
\caption{FRED dataset samples across lighting conditions. Top row: RGB frames. Bottom row: the dataset's own event-frame renderings, in which events appear as light blue and white marks against a dark background; the corner inset enlarges the target region, with the ground-truth box drawn inside it. While RGB quality degrades from day to night, the event stream retains a visible drone signature in each. Qualitative illustration; no per-condition accuracy is claimed.}
\label{fig:fred_overview}
\end{figure}

FRED's canonical split partitions the benchmark into a 184-sequence training partition and a disjoint held-out test partition, and publishes no validation partition. We therefore adopt FRED's own split and carve within it: every development result in this paper is obtained on an \emph{internal development protocol} derived from those 184 canonical training sequences alone, which carry 510{,}373 RGB--event paired frames by our own count: we carve them into a 147-sequence training split (406{,}701 paired frames) and a disjoint 37-sequence development-validation split (103{,}672 paired frames), recording-level and non-overlapping. The benchmark's canonical test split was held out through every selection, ablation, and diagnostic decision in this work, and evaluated exactly once afterwards (Section~\ref{sec:main_results}). Development numbers remain non-comparable to results computed on that split. Event frames are provided at 30\,FPS, temporally aligned with RGB frames.

FRED's distributed annotations and reference loader preserve all rows that share a timestamp, while our earlier conversion retained one row per timestamp (a defect in our conversion, not in FRED), so we rebuilt every label directly from the distributed source. The rebuild restored 23{,}774 boxes in the 147-sequence training split, from 409{,}650 to 433{,}424, and 18{,}310 second boxes in the 37-sequence development-validation split, from 103{,}672 to 121{,}982 boxes. We call the result the \emph{source-complete} label contract (complete up to a known residue of frames whose second box could not be disambiguated: 1{,}935 development frames, 1.87\%, and 1{,}448 test frames, 1.21\%, affecting both splits in near-equal measure): every checkpoint in this paper is trained on the source-complete training-label root and scored against the source-complete development-validation labels, with the exact label manifests and hashes recorded alongside the released artifacts.

\subsection{Event Preprocessing}
\label{sec:preprocessing}

The preprocessing pipeline converts raw asynchronous event streams into sparse voxel grids for 3D sparse convolution. It supports two spatial resolutions sharing identical architecture and 6-channel temporal surface features, and every controlled result in this paper is measured at the first:

\begin{itemize}[leftmargin=*]
    \item $640 \times 640$ (primary): $C_\text{in}{=}6$ temporal surface features (Section~\ref{sec:voxelization}), $T{=}16$ temporal bins, spatially resized to $640 \times 640$, activity-outlier voxel filtering (within each 33\,ms window, voxels whose event count exceeds three times that window's own mean events-per-voxel, with the threshold floored at three events, are suppressed).
    \item Native $1280 \times 720$: Same 6-channel features and $T{=}16$ bins, at the sensor's native resolution (no spatial rescaling). The pipeline supports this configuration and we release its voxelized data, but no result reported in this paper is measured at it.
\end{itemize}

Common pipeline. The continuous event stream is divided into non-overlapping windows of 33\,ms (one frame at 30\,FPS), aligned with FRED annotation timestamps. Windows are half-open, so an event whose timestamp falls exactly on a boundary belongs to the later frame; the window origin and the boundary search that resolves it follow the pinned voxel-construction source (Section~\ref{sec:repro}). Within each window, timestamps are made frame-relative and quantized into $T$ uniform temporal bins:
\begin{equation}
    t_\text{bin} = \mathrm{clip}\!\left(\left\lfloor \frac{t \cdot T}{\Delta t} \right\rfloor,\; 0,\; T{-}1\right), \quad t_\text{bin} \in \{0, 1, \ldots, T{-}1\},
\end{equation}
where $t$ is the event's offset within its own window.
Only voxels with at least one event are stored. The output is a pair: integer coordinates $\mathbf{C} \in \mathbb{Z}^{M \times 3}$ (ordered $[t, y, x]$) and features $\mathbf{F} \in \mathbb{R}^{M \times C_\text{in}}$, where $M$ is the number of active voxels, wrapped into a \texttt{SparseConvTensor}~\citep{spconv2022}. The preprocessing path constructs no dense spatial H$\times$W or T$\times$H$\times$W feature grid; spatial data flow as coordinate--feature pairs (Figure~\ref{fig:preprocessing}).

Occupancy. Table~\ref{tab:sparsity} reports stage-wise active-site counts and occupancy, measured over all 103{,}672 frames of the 37-sequence development-validation split with the profiler described in Section~\ref{sec:repro}; Section~\ref{sec:efficiency} analyzes the full distributions.

\input{tables/sparsity_statistics}

\input{figures/preprocessing_pipeline}

\subsection{FRED Baselines}

FRED's published detection baselines, YOLOv11~\citep{jocher2024ultralytics}, RT-DETR~\citep{lv2024rtdetr}, Faster R-CNN, and ER-DETR~\citep{erdetr2024}, all process event frames as dense 3-channel images; none uses the native sparse structure of the event data. Their published values (Section~\ref{sec:related}) are computed on FRED's canonical test partition under undisclosed recipes and evaluator implementations, so they serve as context for our sealed test rows only. Table~\ref{tab:main_results} therefore carries two blocks, development and sealed test, each containing only rows we control end to end under one set of frames, labels, evaluator, and post-processing settings.

FRED reports 87.68 AP50 and 49.25 AP50:95 for event-frame YOLOv11 on its canonical benchmark partition, without the variant, resolution, recipe, evaluator implementation, code, or weights. We therefore retain 87.68 as an external published reference and contribute the controlled dense reference that FRED currently lacks: a YOLO11n control trained on the same 147-sequence split and source-complete training-label contract, scored on the same 37-sequence frames and source-complete development-validation labels through the frozen evaluator used for SparseVoxelDet. We release that row with its full recipe (Section~\ref{sec:repro}), label contract, evaluator, and checkpoint hashes. The control reaches 77.29 AP50 and 38.12 AP50:95 and appears in Table~\ref{tab:main_results}, which carries controlled rows only.

Both arms are event-only: the dense control consumes rendered event frames, while SparseVoxelDet consumes coordinate--feature pairs and never materializes a frame. The comparison therefore contrasts dense-frame with coordinate-sparse \emph{processing} of the same event stream under shared frame indices, labels, evaluator, and post-processing, rather than a difference in sensing modality. The two input pipelines pair a label index with adjacent 33\,ms windows under slightly different timing conventions (Section~\ref{sec:repro}), so the underlying sensor evidence is index-aligned rather than identical; the comparison does not isolate processing as a single factor, and Section~\ref{sec:limits} states each remaining asymmetry.

\subsection{Training Details}

SparseVoxelDet is trained under a 20-epoch schedule (epochs numbered 0--19) with a 5{,}000-optimizer-step linear warmup followed by a fully decaying cosine schedule and EMA weights. For each run, the immutable selection rule chooses the highest development AP50, breaks a tie by AP50:95, and then chooses the earlier epoch. The rule selects epoch 5 of the quality-aligned arm, released as \texttt{epoch\_005.pt} (SHA-256 \texttt{b25c62a0}, 99{,}757{,}547 bytes), evaluated and released with its EMA shadow weights. Its in-training selection trace climbs from 81.73 AP50 at epoch~0 to 86.97 at epoch~5 (the offline re-score carried by the tables is 87.01, Section~\ref{sec:error_forensics_results}); Figure~\ref{fig:training_curves} shows the trace for both supervision arms. That arm's final epoch was stopped by the training controller's power watchdog and produced no validation row, so every epoch-matched comparison in this paper runs over the nineteen scored epochs.

\begin{itemize}[leftmargin=*]
    \item Optimizer: AdamW~\citep{loshchilov2019decoupled} with learning rate $3 \times 10^{-4}$ and weight decay $1 \times 10^{-2}$.
    \item Batch size: 2 per GPU. The center-only baseline arm trains on one GPU with gradient accumulation over 4 steps (effective batch 8); the quality-aligned arm trains with 3-GPU distributed data parallelism (global batch 6, no accumulation).
    \item Precision: Mixed-precision training (FP16) with dynamic loss scaling.
    \item EMA: Exponential moving average of weights with decay 0.9997.
    \item Gradient clipping: Max norm 5.0 with NaN gradient sanitization.
    \item Loss weights: $\lambda_\text{cls} = 1.0$, $\lambda_\text{reg} = 2.0$, $\lambda_\text{ctr} = 1.0$; focal loss $\alpha = 0.25$, $\gamma = 2.0$. Both follow FCOS~\citep{tian2019fcos}.
    \item Augmentation: Horizontal flip ($p{=}0.5$), polarity inversion ($p{=}0.3$, permuting the 6-channel feature pairs), random event dropout ($p{=}0.02$), and a random affine applied to every sample (scale $\times$0.8--1.2 about the image centre, translation up to 10\% of each axis). Each training batch independently draws its input size from \{576$^2$, 608$^2$, 640$^2$\}; validation always runs at $640^2$. The full augmentation order and the two rules under which an annotated box is dropped before the loss sees it are read from the pinned dataset source (Section~\ref{sec:repro}).
    \item Seeds: The development protocol uses seed 42 for architecture and objective selection, plus one replication seed trained under the frozen final configuration after all design decisions were locked. The two runs select at adjacent epochs and land within $0.63$ AP50 of one another: seed 42 at epoch 5 with $87.01$ AP50 and $43.14$ AP50:95, seed 123 at epoch 6 with $86.38$ and $42.26$. We give both values rather than a mean and a standard deviation, which two runs do not support.
\end{itemize}

Hardware identifiers, durations, epoch counts, optimizer-update counts, and checkpoint-selection outcomes for both arms are in the frozen run records. The seed-42 quality-aligned run scored 19 epochs of its 20-epoch schedule and selected at epoch 5, after 406{,}498 optimizer updates. The seed-123 run is a truncated partial replication: it scored 16 of its 20 scheduled epochs and stopped entering epoch 16 on a deterministic mixed-precision gradient-scaler overflow. It selected at epoch 6, and its values are labeled as a partial replication wherever they appear. The two supervision arms match on per-epoch forward-pass exposure to within two samples and on update-contributing samples to within 0.051\% of an epoch; the per-epoch counters behind both figures are in the released run records. The original arms' optimizer topologies differed (one accumulating GPU against three distributed ranks), so the headline supervision comparison uses a center-only arm retrained under the quality-aligned arm's exact distributed configuration; its run record, abort receipt, and offline scoring are frozen alongside the others, and Section~\ref{sec:limits} discloses its truncation. Figure~\ref{fig:training_curves} compares the original arms epoch by epoch.

The dense control's own recipe is released in full with the code (Section~\ref{sec:repro}). Two entries are resolved by the framework rather than requested by us, and both are stated in that release. The requested \texttt{auto} optimizer resolves, for a schedule of this length, to SGD at learning rate $0.01$ and momentum $0.9$, overriding the requested momentum and zeroing the requested warmup bias rate. The framework also releases the checkpoint with the highest AP50:95 alone, where SparseVoxelDet's rule takes the highest AP50 and breaks ties on AP50:95. Applying our rule to the control's own validation trace would release epoch 12 rather than epoch 22, and epoch 12 scores 0.09 AP50 higher on the framework's internal evaluator, so the asymmetry runs slightly against the control and cannot account for the margin.

\subsection{Evaluation Protocol}

We report COCO-style mAP@50 and mAP@50:95 computed over all 103{,}672 frames of the 37-sequence development-validation split. At inference, the score threshold is 0.05 (the frozen, study-specific score floor; absolute AP values are specific to this operating point), NMS IoU threshold is 0.5, and the maximum number of detections per frame is 100. Validation runs every epoch over the full development-validation split using the EMA weights. The sealed-test rows of Table~\ref{tab:main_results} apply the same evaluator, the same operating point, and the same EMA weights, once, to the 119{,}459-frame canonical test split under source-complete label fingerprint \texttt{84ab36d5}. Margins are computed at full precision before rounding and may differ by 0.01 from the difference of the two displayed values.

\subsection{Error Forensics Pipeline}
\label{sec:error_forensics}

We run an error-forensics pass on the selected checkpoint's cached development-validation predictions: per-frame predictions with scores, matched ground-truth boxes, IoU values, and TP/FP/FN labels, stratified by IoU, confidence rank, object size, event density, and sequence. The pass is a replay rather than a re-run: it reproduces the frozen evaluator's decisions bit-exactly at every one of the ten IoU thresholds, and its matched ground-truth set size equals its reported true-positive count. Its frozen output, summarized in Section~\ref{sec:error_forensics_results}, is released with the code.

%% file: tables/sparsity_statistics.tex
\begin{table}[t]
\centering
\footnotesize
\caption{Stage-qualified SparseVoxelDet topology at $640^2$, measured over all 103{,}672 frames of the 37-sequence development-validation split. Occupancy is the realized active-site share of each stage's own T$\times$H$\times$W site count; these counts are not interpreted as FLOPs or memory. Read down the occupancy column: the median rises at every stage from the input to the fused pyramid, which is support inflation measured rather than asserted. The final row is the expansion-free head, which replaces the row above it and is not a further step in that ladder. The input carries a second, coarser basis used in the text (its median 4{,}272 active coordinates are ${\sim}$95.88$\times$ fewer than the 409{,}600 pixels of the $640^2$ lattice, at a mean of 14{,}582.66), and dividing those two numbers returns 1.04\%, which is a coordinate count against a pixel lattice and not an occupancy.}\label{tab:sparsity}
\begin{tabular}{@{}lrrrrr@{}}
\toprule
 & Active sites & Stage sites & \multicolumn{3}{c}{Occupancy (\%)} \\
\cmidrule(l){4-6}
Stage & (median) & (reference) & median & p95 & max \\
\midrule
Input volume & 4{,}272 & 6{,}553{,}600 & 0.0652 & 0.77 & 5.73 \\
Stem & 14{,}266 & 1{,}638{,}400 & 0.87 & 11.91 & 33.82 \\
Backbone C2 (stride 4) & 19{,}271 & 409{,}600 & 4.70 & 53.33 & 80.72 \\
Backbone C3 (stride 8) & 16{,}867 & 102{,}400 & 16.47 & 92.92 & 99.86 \\
Backbone C4 (stride 16) & 11{,}742 & 25{,}600 & 45.87 & 99.94 & 100 \\
Fused pyramid (pre-pool), transpose & 282{,}039 & 409{,}600 & 68.86 & 99.59 & 99.96 \\
Post-pool support, transpose & 20{,}194.5 & 25{,}600 & 78.88 & 99.89 & 100 \\
\midrule
Post-pool support, expansion-free & 2{,}695 & 25{,}600 & 10.53 & 80.67 & 97.67 \\
\bottomrule
\end{tabular}
\end{table}

%% file: figures/preprocessing_pipeline.tex

\begin{figure*}[t]
\centering
\resizebox{\textwidth}{!}{%
\begin{tikzpicture}[
    box/.style={draw, rounded corners=3pt, minimum height=1.15cm, minimum width=2.0cm, font=\small\sffamily, align=center, thick, inner sep=4pt},
    note/.style={font=\small\sffamily, align=center},
    arr/.style={-{Stealth[length=2.5mm, width=1.8mm]}, thick, black!65},
    branch/.style={-{Stealth[length=2.5mm, width=1.8mm]}, thick, dashed, black!50},
]

\node[box, fill=blue!10] (raw) {Raw events\\$\{x_i,y_i,t_i,p_i\}$};
\node[box, fill=blue!10, below=1.3cm of raw] (slice) {33\,ms windows\\30\,FPS alignment};
\draw[arr] (raw) -- (slice);

\node[box, fill=green!10, right=0.9cm of raw] (resize) {Sparse resize\\$1280{\times}720 \rightarrow 640{\times}640$\\anisotropic};
\node[box, fill=green!10, right=0.55cm of resize] (voxel) {$T{=}16$ temporal bins\\activity-outlier filter\\$c>\max(3,3\bar c)$};
\node[box, fill=green!10, right=0.55cm of voxel] (features) {Six temporal-surface\\features per\\active voxel};
\node[box, fill=red!8, draw=red!40, right=0.55cm of features] (sparse) {Coordinate--feature pairs\\$\mathbf C\in\mathbb Z^{M\times3}$, $\mathbf F\in\mathbb R^{M\times6}$};
\draw[arr] (slice.east) -- ++(0.35,0) |- (resize.west);
\draw[arr] (resize) -- (voxel);
\draw[arr] (voxel) -- (features);
\draw[arr] (features) -- (sparse);
\node[note, below=3pt of sparse] {SparseVoxelDet; no dense spatial feature grid};

\node[box, fill=orange!12, right=0.9cm of slice] (render) {Dense event image\\native $1280{\times}720$ render};
\node[box, fill=orange!12, right=0.55cm of render] (letterbox) {YOLO letterbox\\$640{\times}360$ signal\\$280$ padding rows};
\node[box, fill=orange!12, right=0.55cm of letterbox] (dense) {Dense $640{\times}640$\\event tensor};
\draw[arr] (slice) -- (render);
\draw[arr] (render) -- (letterbox);
\draw[arr] (letterbox) -- (dense);
\node[note, below=3pt of dense] {YOLO11n dense event-only control};

\node[box, fill=gray!8, below=1.05cm of slice] (native) {Native-resolution sparse\\configuration $T{\times}720{\times}1280$\\no spatial resize};
\draw[branch] (slice) -- (native);
\node[note, below=3pt of native] {Released; unmeasured here};

\end{tikzpicture}%
}
\caption{Event preprocessing and comparison paths. Both arms use non-overlapping 33\,ms event windows of the same nominal duration, paired to labels under slightly different indexing conventions (Appendix~\ref{app:repro}). The primary SparseVoxelDet path anisotropically maps native $1280{\times}720$ coordinates onto a $640{\times}640$ lattice, quantizes timestamps into $T{=}16$ bins, suppresses voxels whose count exceeds $\max(3,3\bar c)$ for that window, computes six temporal-surface features, and retains only coordinate--feature pairs. The pipeline also supports a native-resolution sparse configuration, whose voxelized data we release, but no result reported in this paper is measured at it. The dense event-only control renders the same event stream at native aspect ratio and letterboxes it to $640{\times}640$, yielding $640{\times}360$ signal plus 280 padding rows. Section~\ref{sec:limits} accounts for the input-sampling asymmetry between the two arms.}
\label{fig:preprocessing}
\end{figure*}

%% file: sections/06_results_and_analysis.tex
\section{Results and Analysis}
\label{sec:results}

\subsection{Main Results}
\label{sec:main_results}

Table~\ref{tab:main_results} reports the controlled comparison on FRED: an identified checkpoint, source-complete development labels, a frozen frame set, one evaluator, and one post-processing configuration. The control is a system comparator rather than an architecture-matched baseline: SparseVoxelDet carries 2.4$\times$ its parameters (6.22M against 2.6M) and receives 640 signal rows to the control's 360 under its isotropic letterbox, while the control is COCO-pretrained and selects from a longer schedule, so Section~\ref{sec:limits} states each asymmetry's direction. Holding frames, source-complete labels, and evaluator fixed, the dense control reaches 77.29 AP50 and 38.12 AP50:95, and SparseVoxelDet reaches 87.01 AP50 and 43.14 AP50:95. Both arms are scored on the same 103{,}672 development-validation frames, so the margins are paired by construction: $+$9.72 AP50 and $+$5.02 AP50:95. Two seeds do not support an interval estimate, so we report dispersion distribution-free over the 37 paired sequences instead. Re-scoring the same two frozen prediction caches sequence by sequence, the per-sequence paired margin spans $-$12.78 to $+$34.50 AP50 with median $+$3.83, and $-$30.48 to $+$17.33 AP50:95 with median $+$3.06; SparseVoxelDet leads on 34 of the 37 sequences at AP50 (two-sided sign test, $p = 1.2\times10^{-7}$) and on 32 of 37 at AP50:95 ($p = 7.4\times10^{-6}$). Average precision is not additive over sequences, so the pooled margin is not the mean of these per-sequence values.

The resolution-matched control measures how much of that margin the input-sampling asymmetry explains, and the answer is most of it. Retraining the same YOLO11n on the identical anisotropic $640^2$ resize our model consumes (removing the 640-versus-360 signal-row asymmetry while keeping COCO pretraining, the longer schedule, and the capacity difference) raises the dense control from 77.29 to 84.68 AP50 and from 38.12 to 42.44 AP50:95 on the same frozen development split, under the checkpoint-selection rule preregistered before scoring (per-epoch evaluation of all 100 epochs through our evaluator, maximum AP50, epoch 24 selected). The paired development margin against this stronger control is $+$2.33 AP50 and $+$0.70 AP50:95, so the sparse system still leads when the dense arm consumes the same input sampling, though architecture, pretraining, and capacity still differ. This control is a single seed and a development-only measurement; the sealed test was spent before it was scored, so it has no test row.

A controlled audit of one frozen YOLO11n prediction cache (permissive setting: confidence 0.001, NMS 0.7, 300 detections) separates the published discrepancy into protocol effects. Each factor is changed on its own against the same 82.42 permissive baseline: replacing the source-complete labels with the one-box-per-timestamp legacy contract lowers its AP50 from 82.42 to 79.96, and, separately, re-scoring the 82.42 baseline under the frozen SparseVoxelDet post-processing settings (threshold 0.05, class-agnostic NMS 0.5, 100 detections) lowers it to 77.06, so labels contribute 2.46 points and the operating point 5.36; the two one-factor effects were not composed. The published 87.68 therefore remains an uncontrolled external reference.

The evaluator was arbitrated rather than assumed. Scoring one frozen dense cache three ways gives 77.29 AP50 under this paper's evaluator, 76.64 under a third-party COCO implementation, and 84.29 under the Ultralytics routine, whose seven-point outlier is an integration artifact (an interpolated ramp to zero precision at recall one). We report the in-house evaluator because it is the one frozen into training and selection; it sits 0.66 AP50 \emph{above} the third-party implementation on the dense cache and 0.45 above on the sparse cache, so both arms are displaced together and the evaluator choice moves the controlled margin by at most the 0.21-point difference between the two offsets, far below the margin itself.

The sealed test, evaluated once. After every decision governing the released models, evaluator, and operating point was frozen, the two released SparseVoxelDet checkpoints and the dense control were each evaluated a single time on FRED's canonical test partition (119{,}459 frames under source-complete label fingerprint \texttt{84ab36d5}, the same evaluator, and the same operating point) under a preregistration written before the split was touched. SparseVoxelDet reaches 84.33 AP50 and 40.19 AP50:95 on seed 42 and 83.01 and 39.16 on the truncated replication seed (Section~\ref{sec:experiments}), against 79.37 and 37.00 for the dense control: margins of $+$4.96 and $+$3.64 AP50 over the control, on data that neither model nor any selection decision ever saw. The two seeds are two values; no across-seed statistic is estimable from them.

Crossing the partition boundary reports the development-to-test change, a compound of selection bias, partition difficulty, and distribution shift that a single crossing cannot decompose, and the two arms move in opposite directions. The sparse detector loses 2.68 AP50 from development to test (87.01 to 84.33) while the dense control gains 2.08 (77.29 to 79.37), so the pooled margin narrows from 9.72 to 4.96 AP50 and from 5.02 to 3.19 AP50:95. The direction of the control's shift bears on the concern that our carved development split favors our own method: the split we carved is the harder of the two for the baseline, and the ordering survives on both seeds. The sealed row also calibrates the external axis: scoring the dense control's one frozen test prediction set through Ultralytics' own metric function instead of ours returns 85.98 AP50 against our evaluator's 79.37 (a 6.6-point displacement on identical predictions), evidence that the gap between our controlled dense row and FRED's published 87.68 is largely a protocol effect rather than a model effect.

\input{tables/tab_main_results}

\subsection{Supervision and Fusion Ablation}
\label{sec:ablation_objective}

Table~\ref{tab:objective_fusion} compares the paper's two design responses on identical data, the same nominal schedule, and one evaluator. The supervision comparison is topology matched: a center-only arm retrained under the quality-aligned arm's exact distributed configuration (3-GPU data parallelism, global batch 6, the same learning rate, warm-up, and cosine schedule) reaches its best development AP50 of 85.52 at epoch 6, scored offline under the uniform protocol of Section~\ref{sec:experiments}, so the supervision gain is 1.49 AP50 with optimizer topology held fixed. The earlier center-only arm trained on one accumulating GPU reaches 85.71, within 0.19 of the topology-matched arm, so the configuration difference the original comparison carried is small and favored the baseline. The matched rerun stopped at epoch 16 of its 20-epoch schedule on a deterministic mixed-precision overflow, so 85.52 is the best of its 16 scored epochs (Section~\ref{sec:limits}). The table varies one design change at a time rather than crossing them: the factorial cell crossing quality-aligned supervision with transpose fusion was not trained. Holding supervision fixed at the center-only objective, replacing transpose fusion with expansion-free inverse-convolution fusion cuts the support the head must process from 78.88\% to 10.53\% median occupancy by construction, and costs 0.60 AP50 (86.31 to 85.71); an expansion-free operator restores coarse features only onto supports the backbone already stored, and under a positional center-only objective that restriction constrains what the head can learn. Moving from center-only to quality-aligned supervision on the expansion-free architecture delivers 1.49 AP50 at matched topology, recovering the fusion cost and adding to it: the final configuration finishes 0.70 AP50 and 2.60 AP50:95 above the transpose baseline while processing roughly a seventh of its head support. The expansion-free path is thus the most accurate of the four trained sparse arms.

\input{tables/tab_objective_fusion}

Figure~\ref{fig:training_curves} shows the development-validation trajectories of both supervision arms under the frozen selection rule (highest AP50, ties broken by AP50:95, then the earlier epoch). Score--quality alignment for the selected checkpoint is visible directly in the score distributions: matched detections carry a mean confidence of 0.483 (median 0.502, 95th percentile 0.656) against 0.122 for unmatched ones (median 0.078, 95th percentile 0.388), and matched detections average 0.758 IoU with their assigned ground truth. The two distributions overlap in the low-confidence band, where the operating point of Section~\ref{sec:error_forensics_results} does its work.

\input{figures/training_curves}

\subsection{Error Forensics}
\label{sec:error_forensics_results}

The forensics pass of Section~\ref{sec:error_forensics} re-scores the selected checkpoint from a cached prediction set rather than re-reading the training log, and it reproduces that checkpoint's AP50 to within 0.05 points: 106{,}767 matched ground-truth boxes against the 106{,}766 recorded during training, one borderline detection out of 121{,}982 crossing the 0.50 matching boundary between runs. Tables~\ref{tab:main_results} and~\ref{tab:objective_fusion} carry the cached offline re-score, from which every derived margin in this paper is computed; the in-training selection trace appears in Section~\ref{sec:experiments}, and the 0.05-point difference between the two is a measured run-to-run reproducibility band for sparse-convolution inference.

Precision, recall, and F1 are reported at the confidence maximizing F1 over the evaluator's fixed grid, $\tau^\star = 0.25$, whereas average precision integrates the full detection set above the 0.05 score floor. At $\tau^\star$ the pass reports 106{,}767 true positives, 18{,}286 false positives, and 15{,}215 false negatives, for recall 0.875, precision 0.854, and F1 0.864 under the frozen evaluator.

Every false negative is attributed to exactly one cause under a fixed priority order, so the components sum to the total. Of the 15{,}215 false negatives, 10{,}285 (67.6\%) are objects the detector found but scored below the reporting threshold, 3{,}304 (21.7\%) are localized above threshold but below the 0.50 matching IoU, 1{,}319 (8.7\%) were removed by non-maximum suppression, 302 (2.0\%) are complete misses with no overlapping prediction at any cached score, and 5 are ranking conflicts in which a sufficient box existed but was out-scored (Figure~\ref{fig:error_breakdown}). The dominant term is therefore scoring rather than detection failure. Complete failures to respond to an object at all account for 2\% of misses and 0.25\% of all ground-truth boxes (Figure~\ref{fig:iou_fn_breakdown}). Only 118 of 103{,}672 frames yield no surviving detection at all, and no frame reaches the 100-detection cap.

\input{figures/error_forensics}

Stratified analyses by confidence band, object size, scene density, and sequence are released with the forensics artifact. Per-sequence recall spans 0.519 to 0.994, so the aggregate figures average over sequences of substantially different difficulty. Average precision falls from 0.870 at the 0.50 matching threshold to 0.820 at 0.55, 0.748 at 0.60, 0.652 at 0.65, 0.530 at 0.70, and 0.383 at 0.75. That profile identifies localization precision, not detection, as the binding constraint over this sweep: the residual error is boundary accuracy on targets a few pixels across.

\input{figures/iou_threshold_analysis}

\subsection{The Anatomy of Sparsity}
\label{sec:efficiency}

Coordinate sparsity is usually justified by input occupancy. Our stage-wise measurements over all 103{,}672 development-validation frames show why that justification is incomplete, and where the design must intervene.

Support inflation. At $640^2$ the input contains a median 4{,}272 active coordinates (mean 14{,}582.66), a median 0.0652\% of the voxel lattice and roughly 95.88$\times$ fewer coordinates than the 409{,}600-pixel dense input. (The percentage is against the full $16{\times}640{\times}640$ lattice, the count ratio against the $640^2$ pixel grid a dense 2D detector would process; in raw counts, 4{,}272 input coordinates become 282{,}039 fused-pyramid sites under transpose fusion against 2{,}695 head sites under expansion-free fusion.) Every strided sparse convolution dilates the active set, and top-down fusion compounds it: under transpose fusion the fused three-dimensional pyramid reaches a median 68.86\% occupancy, temporal pooling leaves a median 20{,}194.5 of 25{,}600 stride-4 head sites (78.88\% occupancy), and the final block realizes a median 24.54 of its 27 kernel taps. The support is locally near-dense precisely where the widest channels execute: low input occupancy does not bound compute, because the detector pays where support has inflated, not where the sensor was sparse.

Because a deployed detector is bounded by its busy frames, we report the full occupancy distribution rather than the median alone: under transpose fusion the post-pool head support is 78.88\% at the median (mean 75.90\%), 99.89\% at the 95th percentile, and 100\% at the maximum; under expansion-free fusion it is 10.53\% at the median (mean 28.70\%), 80.67\% at the 95th percentile, and 97.67\% at the maximum. Expansion-free fusion is therefore sparser at every reported statistic, but the reduction factor falls from 7.5$\times$ at the median to 1.24$\times$ at the 95th percentile and 1.02$\times$ at the maximum. The head-occupancy distribution is right-tailed, and the two fusion modes converge only on the densest frames, where no coordinate-sparse method can retain sparsity the scene does not offer.

Expansion-free fusion intervenes by construction. Inverse-convolution fusion (Section~\ref{sec:fpn}) restores coarse features only onto stored backbone supports, returning head occupancy from 78.88\% to 10.53\% at the median. Within our architecture family (identical frames, hardware, and precision, the two arms differing only in their two top-down fusion operators), this cuts realized fusion-stage multiply--accumulates by 91.3\% (95\% CI 84.1--95.2\%) and whole-model realized multiply--accumulates by 70.1\% (95\% CI 55.7--80.1\%). The whole-model figure is the weaker of the two because the arms share a bit-identical backbone that already accounts for a median 78.5\% of the expansion-free model's multiply--accumulates and cannot improve. Whole-model latency falls by 47.6\% (95\% CI 36.7--54.8\%). Work and latency are profiled over 5{,}000 paired frames from 36 of the 37 development-validation sequences, three timed repeats each; intervals are sequence-cluster percentile bootstraps over paired per-frame ratios (medians of ratios, so dividing released aggregate counts does not reproduce them). Incremental peak allocated memory falls by 83.7\% on a narrower 30-frame panel at batch size 1, reported as a paired 5th--95th percentile (44.6--92.9\%) and treated as the least precise of the four cost statistics. These are within-family measurements against the transpose variant; efficiency scope against the dense control is stated in Section~\ref{sec:limits}, and Appendix~\ref{app:protocol} defines the reporting protocol behind these support-efficiency statistics.

What coordinate sparsity buys. The comparison above answers which fusion operator is cheaper, not what coordinate-sparse \emph{execution} buys over dense execution, because both arms are sparse. We transplanted the selected checkpoint's entire operator set (15 submanifold, 7 strided sparse, and 2 inverse sparse convolutions) into dense \texttt{conv3d} equivalents carrying the identical weights, and gated the transplant before profiling: on ten frames in FP32 with TensorFloat-32 disabled in both the matrix-multiply and convolution paths, the worst head deviation between the two arms is $8.6\times10^{-5}$ against a $10^{-3}$ tolerance, and the realized coordinate sets are identical. Both arms run at batch size 1 under CUDA autocast FP16 on one RTX 3090, over the same 5{,}000 paired frames, three timed repeats each; the gate was not re-established at the shared FP16 profile precision, so the attribution rests on the FP32 gate together with that shared precision, and precision- or input-dependent divergence at FP16 is bounded by that evidence rather than excluded. The dense arm's latency includes the support generation and masking that materializing a dense grid from a sparse input requires, so the reported ratio is between complete forward passes rather than between convolution kernels alone.

Sparse execution costs less on every axis measured. Taking the median of the 5{,}000 paired per-frame ratios, dense execution of the same network performs 27.5$\times$ the logical multiply--accumulates (95\% sequence-cluster CI 9.50--37.62$\times$), takes 4.65$\times$ the latency (95\% CI 2.63--5.27$\times$), and holds 11.4$\times$ the peak allocated memory, the memory figure on the same narrower 30-frame panel used above, reported without an interval. No frame in the sample is faster or cheaper dense: the smallest per-frame latency ratio anywhere in the run is 1.14$\times$ and the smallest work ratio is 2.12$\times$, and restricting to the 250 busiest frames (50{,}432 to 351{,}421 active input coordinates) leaves the minimum unchanged and the median at 1.28$\times$. Nor does the result depend on charging the dense arm for its setup: subtracting its support generation and masking (a median 11.1\% of its forward time) moves the latency ratio to 4.13$\times$, the smallest per-frame ratio to 1.08$\times$, and no frame of the 5{,}000 turns faster dense.

The size of the benefit is a function of occupancy. Binning the same frames by active input coordinates into deciles, the median latency ratio falls monotonically from 7.05$\times$ on the sparsest decile to 1.41$\times$ on the densest, and the median work ratio from 131.5$\times$ to 2.98$\times$. The consequence for a frame budget is direct: at 30\,FPS, 3{,}084 of the 5{,}000 frames (61.7\%) complete inside 33.3\,ms sparse, against 0 of 5{,}000 dense. We state that fraction rather than a frame rate, because the sparse arm's 90th-percentile latency is 106.8\,ms. The measurement is bounded to one card, one checkpoint, 36 of 37 sequences, and 191 to 351{,}421 active input coordinates.

Two estimators are defensible here and they do not agree. Every dense-versus-sparse execution ratio reported above is a median of paired per-frame ratios, which describes the typical frame. The alternative, dividing the pooled total for one arm by the pooled total for the other, describes the whole workload as a batch, and on this run it returns 8.58$\times$ for work and 2.47$\times$ for latency. A pooled total is dominated by the busiest frames, which are exactly where sparse execution has least advantage, so the pooled estimator answers a question about aggregate throughput while the paired median answers one about a frame. We name the estimator wherever a ratio appears.

External reference points, on the one axis that transfers. The most recent efficiency table published on FRED is SMG-UAV's embedded benchmark~\citep{zhang2026smguav}, which reports parameters, theoretical GFLOPs, frame rate, and latency for eleven detectors on an NVIDIA Jetson Orin Nano at batch size 1. Exactly one of those four quantities compares to ours without qualification: parameter count, a static architectural property independent of hardware, input resolution, evaluation split, and profiler. SparseVoxelDet carries 6.22\,M parameters against a published range of 18.5 to 86.5\,M, so it is smaller than every system in that table and 2.97$\times$ smaller than the smallest: RVT, itself an event-only design. That statement carries no accuracy claim and no runtime claim.

The other three axes we decline. Compute: their column is a theoretical cost at a fixed input resolution, while ours is measured and varies 171$\times$ over the 5{,}000 frames (3.29 to 563.69\,GMAC, median 43.40). Latency: their timing scope matches our profiler's, but the hardware differs (an RTX 3090 against an embedded board). Accuracy: the single sealed evaluation of Section~\ref{sec:main_results} puts SparseVoxelDet at 84.33 AP50 on the same canonical test partition where SMG-UAV's protocol reports event-only RVT at 79.3, SAST at 77.9, and SMamba at 81.7; those anchors sit below our sealed row as context, not as an evaluator-matched comparison, since our own dense control moves 6.6 AP50 between two metric implementations on identical predictions. SMG-UAV's higher 89.3 is an RGB+event fusion result and does not enter an event-only comparison.

\subsection{Qualitative Results}
\label{sec:qualitative}

Figures~\ref{fig:qual_tp} and~\ref{fig:qual_failures} show the selected checkpoint on the development-validation split at the operating point $\tau^\star = 0.25$. Each cell pairs the event frame with the RGB frame at the same annotation index and overlays ground truth and predictions. Cells were chosen by a rule fixed before any image was rendered: Figure~\ref{fig:qual_tp} takes one matched detection from each quartile of ground-truth target size, the instance whose IoU is closest to its quartile median; Figure~\ref{fig:qual_failures} takes one instance from each of the four principal false-negative causes of Section~\ref{sec:error_forensics_results}, the instance whose target size is closest to its cause median. We stratify true positives by target size rather than by illumination because FRED carries no lighting annotation; the per-sequence attribute the dataset does provide is the drone model, which each cell reports. Targets span 11.7 to 180.7~px in maximum dimension, with quartile edges 11.73, 26.69, 32.85, 41.07, and 180.69~px.

The failure panel makes the error decomposition concrete. Its four causes account for 15{,}210 of the 15{,}215 false negatives (99.97\%): 10{,}285 boxes that a detection localized correctly but scored below $\tau^\star$ (67.6\%), 3{,}304 localized below the 0.50 matching threshold (21.7\%), 1{,}319 suppressed by non-maximum suppression (8.7\%), and 302 with no overlapping candidate at all (2.0\%). The last cohort has a measured correlate. Counting active pixels inside each ground-truth box across all 16 time bins, all 302 complete misses contain zero, whereas an equal-count control drawn deterministically from the matched detections (equal in cohort size, not matched by target size) contains a minimum of 14 and a median of 232. The two quantities are computed independently (the cause from detector output, the count from the input). We read this as an evidence limit rather than a detection failure. The reading is bounded: extending the count to the two annotation indices on either side, 166 of the 302 boxes remain empty across the whole window, while the remaining 136 hold events at a neighboring index and are equally consistent with a target that briefly stopped generating contrast.

\begin{figure*}[p]
\centering
\includegraphics[scale=0.84]{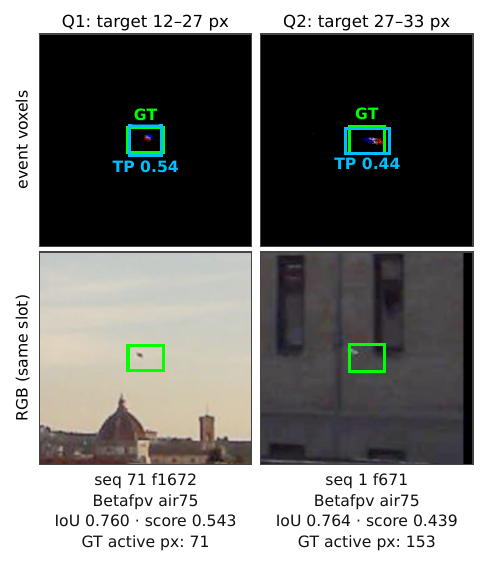}\\[2pt]
\includegraphics[scale=0.84]{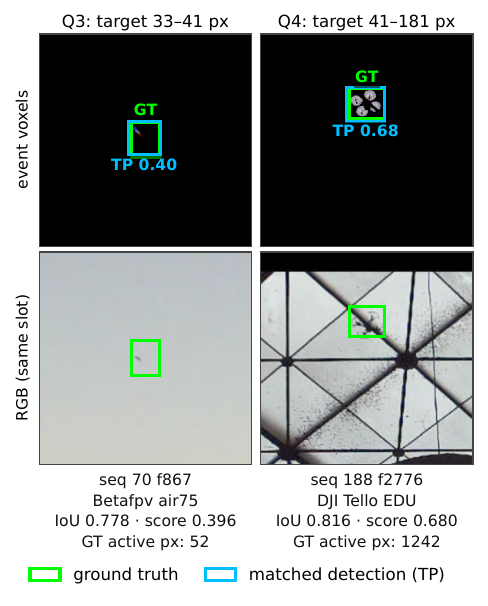}
\caption{Matched detections from the selected checkpoint on the development-validation split, one per quartile of ground-truth target size, arranged in two rows of two. Each cell pairs the accumulated event frame (top) with the RGB frame at the same annotation index (bottom); green: ground truth, cyan: prediction. Annotations give the sequence, frame, drone model, matched IoU, detection score, and the number of active pixels inside the ground-truth box. Cells were selected by the size-quartile rule of Section~\ref{sec:qualitative}, not chosen for appearance; they illustrate the operating point and are not a sample designed to represent the error distribution.}
\label{fig:qual_tp}
\end{figure*}

\begin{figure*}[p]
\centering
\includegraphics[width=\textwidth]{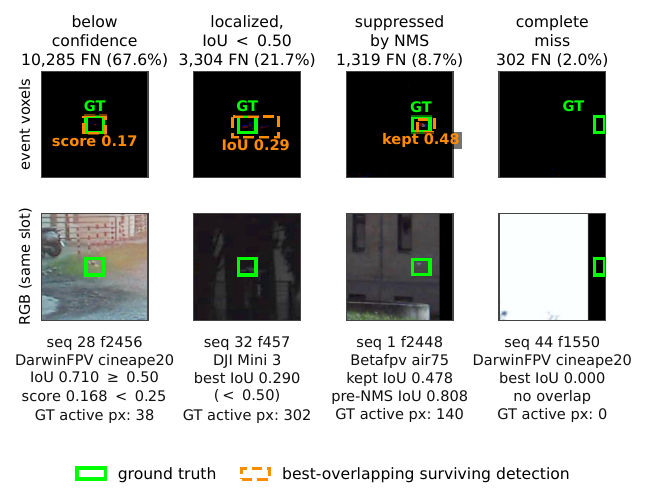}
\caption{The four principal false-negative causes of Figure~\ref{fig:error_breakdown}, one instance each, from the selected checkpoint on the development-validation split. Green: ground truth; orange: the best-overlapping candidate where one exists. Titles give each cause's share of all 15{,}215 false negatives. The below-confidence cell draws one of the 6{,}335 instances whose correct box survives non-maximum suppression, so the box shown is itself the sub-threshold detection; the remaining 3{,}950 of that group have no surviving box to draw. Every number printed on a cell is recomputed from the box drawn in that cell. The complete-miss cell contains no active pixel inside the ground-truth box, the cohort-wide property reported in Section~\ref{sec:qualitative}.}
\label{fig:qual_failures}
\end{figure*}

%% file: tables/tab_main_results.tex
\begin{table}[t]
\centering
\caption{Controlled comparison on FRED, in two blocks: development-validation estimates (all rows on the same frames, labels, evaluator, and post-processing) and the canonical test partition evaluated once, two seeds as two values. The YOLO11n rows are a system comparator, not an architecture-matched baseline; Section~\ref{sec:limits} states the asymmetries, Section~\ref{sec:main_results} the dispersion protocol. The resolution-matched control is a single seed on each arm and has no sealed row: the single permitted test evaluation was spent before it was scored. Per-sequence rows re-score the same frozen caches sequence by sequence; average precision is not additive over sequences, so the pooled margin is not their mean.}\label{tab:main_results}
{\footnotesize
\setlength{\tabcolsep}{3pt}
\begin{tabularx}{\linewidth}{@{}>{\raggedright\arraybackslash}Xcc>{\raggedleft\arraybackslash}X>{\raggedleft\arraybackslash}X@{}}
\toprule
Model & Processing & Modality & AP50 & AP50:95 \\
\midrule
\multicolumn{5}{@{}l}{\emph{Development-validation split (103{,}672 frames)}} \\
YOLO11n control (letterbox) & Dense & Event & 77.29 & 38.12 \\
YOLO11n control (resolution-matched) & Dense & Event & 84.68 & 42.44 \\
SparseVoxelDet & Sparse & Event & 87.01 & 43.14 \\
\quad Paired margin vs.\ letterbox, pooled & & & $+$9.72 & $+$5.02 \\
\quad \quad per-sequence min & & & $-$12.78 & $-$30.48 \\
\quad \quad per-sequence median & & & $+$3.83 & $+$3.06 \\
\quad \quad per-sequence max & & & $+$34.50 & $+$17.33 \\
\quad Paired margin vs.\ resolution-matched & & & $+$2.33 & $+$0.70 \\
\midrule
\multicolumn{5}{@{}l}{\emph{Canonical test partition, evaluated once (119{,}459 frames)}} \\
YOLO11n control (letterbox) & Dense & Event & 79.37 & 37.00 \\
SparseVoxelDet (seed 42) & Sparse & Event & 84.33 & 40.19 \\
\quad Paired margin, pooled & & & $+$4.96 & $+$3.19 \\
SparseVoxelDet (seed 123, partial replication) & Sparse & Event & 83.01 & 39.16 \\
\quad Paired margin, pooled & & & $+$3.64 & $+$2.17 \\
\bottomrule
\end{tabularx}
}
\end{table}

%% file: tables/tab_objective_fusion.tex
\begin{table}[t]
\centering
\caption{Supervision and fusion ablation on the development protocol. All rows share the SparseVoxelDet backbone, data, nominal 20-epoch schedule, evaluator, and selection rule; the topology-matched center-only row shares the quality-aligned row's exact 3-GPU distributed configuration, while the other center-only rows trained on one accumulating GPU (Section~\ref{sec:limits}). Head occupancy is the median realized share of stride-4 post-pool sites over the full development-validation split. The table holds one factor fixed at a time rather than crossing them: with supervision fixed at the center-only objective, the fusion swap cuts head occupancy from 78.88\% to 10.53\% and costs 0.60 AP50; with fusion fixed at the expansion-free architecture, quality-aligned supervision returns 1.49 AP50 at matched topology. The untrained factorial cell (quality-aligned supervision on transpose fusion) leaves the two changes not separated factorially; the final configuration ties for the sparsest head occupancy and is the most accurate of the four trained arms.}
\label{tab:objective_fusion}
\begin{tabular*}{\linewidth}{@{\extracolsep{\fill}}llccc@{}}
\toprule
Fusion & Supervision & AP50 & AP50:95 & Head occ.\ (\%) \\
\midrule
Transpose & Center-only & 86.31 & 40.54 & 78.88 \\
Inverse conv.\ & Center-only & 85.71 & 40.23 & 10.53 \\
Inverse conv.\ & Center-only (topology-matched) & 85.52 & 40.25 & 10.53 \\
Inverse conv.\ & Quality-aligned & \textbf{87.01} & \textbf{43.14} & 10.53 \\
\bottomrule
\end{tabular*}
\end{table}

%% file: figures/training_curves.tex
\begin{figure}[t]
\centering
\includegraphics[width=\linewidth]{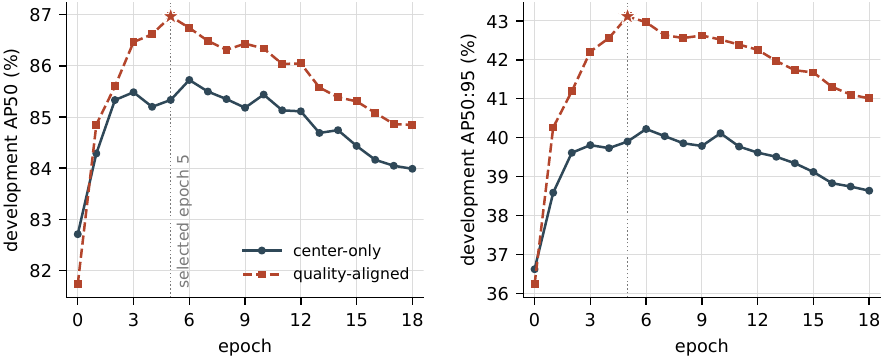}
\caption{Development-validation trajectories of the two supervision arms across the 19 epochs both arms scored, under the frozen evaluator. The quality-aligned arm leads at 18 of those 19 epochs on AP50 and on AP50:95; the single exception is epoch~0, before either arm has converged. The star marks the checkpoint the frozen selection rule returns: epoch~5 of the quality-aligned arm, released as \texttt{epoch\_005.pt}, and the dotted line carries it into the AP50:95 panel. The arms differ in optimizer topology as well as in supervision, so the comparison is matched by epoch and not by update count (Section~\ref{sec:limits}).}
\label{fig:training_curves}
\end{figure}

%% file: figures/error_forensics.tex
\begin{figure}[t]
\centering
\includegraphics[width=\linewidth]{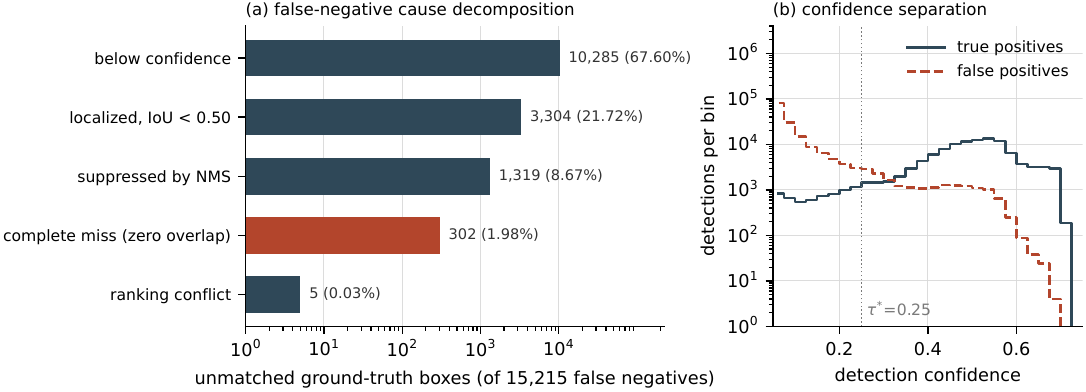}
\caption{Error forensics for the selected checkpoint on the 37-sequence development-validation split, under the frozen evaluator at score 0.05, class-agnostic NMS 0.5, and at most 100 detections per frame. (a)~Every unmatched ground-truth box at matching IoU 0.50 and $\tau^{*}=0.25$ is assigned exactly one cause under a strict priority order, so the five causes sum to the false-negative total exactly; the axis is logarithmic and the complete-miss bar is highlighted. (b)~Confidence histogram of true- and false-positive detections over the full scored set, with $\tau^{*}$ marked. The two populations cross once, low on the confidence axis: above the crossing the detector is right far more often than wrong, and the mass that costs recall lies below it.}
\label{fig:error_breakdown}
\end{figure}

%% file: figures/iou_threshold_analysis.tex
\begin{figure}[t]
\centering
\includegraphics[width=\linewidth]{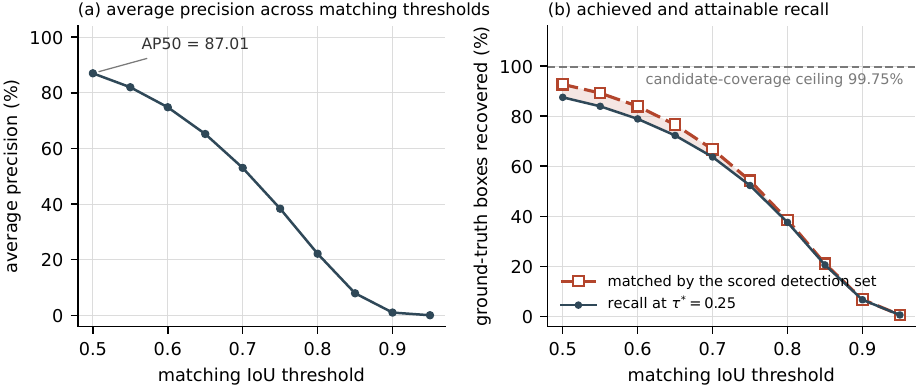}
\caption{Matching-threshold sensitivity of the selected checkpoint on the development-validation split under the frozen evaluator. (a)~Average precision across matching IoU thresholds from 0.50 to 0.95. (b)~Ground-truth boxes recovered at the reported operating point $\tau^{*}=0.25$, against the boxes the scored detection set already reaches before the confidence cut is applied; the shaded band is the share that the confidence threshold alone governs. Both panels are computed from the frozen offline prediction cache that scores the selected checkpoint at 87.01 AP50, the same path as Table~\ref{tab:main_results} and Table~\ref{tab:objective_fusion}. The dashed line is the candidate-coverage ceiling: 99.75\% of ground-truth boxes are overlapped by at least one candidate above the 0.001 pre-NMS score floor, so the zero-overlap cohort that no re-ranking and no threshold change can reach is 302 boxes of 121{,}982.}
\label{fig:iou_fn_breakdown}
\end{figure}

%% file: sections/07_limitations_ethics_repro.tex
\section{Limitations and Future Work}
\label{sec:limits}

\subsection{Limitations}

Dense-control asymmetries. The controlled comparison (SparseVoxelDet at 87.01/43.14 versus the letterbox dense control at 77.29/38.12) holds frames, source-complete labels, and evaluator fixed, but the two systems differ in three stated ways: the control is COCO-pretrained and selects its checkpoint from a longer schedule under its framework's own AP50:95-only rule (favoring the control, an asymmetry Section~\ref{sec:experiments} quantifies at 0.09 AP50); SparseVoxelDet carries 2.4$\times$ the parameters (6.22M versus 2.6M); and our anisotropic $640^2$ resizing gives it 1.78$\times$ the vertical input sampling of the isotropically letterboxed control (640 versus 360 signal rows). The comparison therefore supports a system-level statement rather than an architecture-causal one, and the control is a single seed. The resolution-matched control (Section~\ref{sec:main_results}) removes the input-sampling asymmetry: the dense control rises to 84.68 AP50 and the development margin narrows to $+$2.33, while the pretraining, schedule, and capacity asymmetries remain. At AP50:95 that margin is $+$0.70, smaller than the spread between our own two seeds (43.14 against 42.26, Section~\ref{sec:experiments}), so the resolution-matched accuracy case rests on the AP50 margin and the supervision ablation rather than on AP50:95. FRED's published 87.68 AP50 is an external reference, not an evaluator-matched comparison.

Supervision-ablation training configuration. The original center-only arm of Table~\ref{tab:objective_fusion} trained on one GPU with gradient accumulation over 4 steps (effective batch 8) while the quality-aligned arm trained with 3-GPU distributed data parallelism (global batch 6, no accumulation). The batch-matched confirmation has been run: a center-only arm retrained under the quality-aligned arm's exact distributed configuration reaches a best development AP50 of 85.52, within 0.19 of the single-GPU arm's 85.71, so the topology-matched supervision gain is 1.49 AP50. That rerun stopped at epoch 16 of its 20-epoch schedule when the trainer's preregistered non-finite-gradient gate tripped on a mixed-precision overflow that reproduced bit-identically on two resume replays; the arm's validation AP50 peaks at epoch 6 and declines monotonically through epoch 15; 85.52 is therefore the best of the 16 scored epochs, and the gain is conditional on that training budget.

Efficiency scope. Our efficiency evidence is within-family (expansion-free against transpose fusion on identical frames, hardware, and precision) plus the dense-execution control of Section~\ref{sec:efficiency}, whose dense arm is a mechanical dense realization of a network designed for coordinate-sparse execution and which runs on one card at batch size 1, the axis on which dense kernels recover most. We make no runtime, memory, compute, or energy claim against YOLO11n, whose kernels benefit from optimization that current sparse-convolution libraries do not match, and against published detectors we compare parameter count only. Energy was not measured, no measurement of ours was taken on an embedded platform, and no edge-deployment claim is made.

Residual support inflation. Expansion-free fusion bounds the pyramid's active set by the stored backbone supports, but the backbone's own strided convolutions still inflate support relative to the input (Section~\ref{sec:efficiency}). Fusion adds no sites, yet late-stage support does not return to input-level sparsity, and how backbone dilation scales with scene activity is open.

Single-class and object-count evaluation. The model detects one class (drone), and the 147/37 development protocol contains one- and two-object frames and no drone-free frames in either split, so the evaluation measures ranking and localization but not false-alarm behavior on empty sky. Extension to multi-class, zero-object, and higher object counts requires separate validation.

Development-protocol scope. Checkpoint selection, ablation, and every development number use the same 37-sequence development-validation split, so those values carry a selection-optimistic bias by construction. The single sealed test evaluation reports the resulting development-to-test change (the pooled margin narrows from 9.72 to 4.96 AP50; Section~\ref{sec:main_results}) without isolating selection bias from partition and distribution shift; the benchmark's \emph{challenging} partition remains unevaluated and unclaimed.

Ablation scope. The study ablates fusion mechanism and supervision (Table~\ref{tab:objective_fusion}); other components (temporal fusion strategy, channel depth, backbone depth, the per-sample squeeze-and-excitation block, the detection-head parameterization, and the composition of the regression loss) use a single configuration each. The supervision arm measures its change as a whole rather than isolating which of its three parts (inside-box assignment, soft quality targets, decoded-IoU quality head) carries the gain.

\subsection{Future Directions}

Learnable temporal fusion (sparse temporal attention or recurrent aggregation) could capture motion trajectories beyond what max-pooling preserves. Multi-scale heads at strides 4, 8, and 16 over expansion-free pyramids could extend the size range without reintroducing support growth. Layering scene-adaptive token selection~\citep{peng2024sast,yang2025smamba} atop the coordinate-sparse backbone would let attention prune what structural sparsity has preserved.
All results are reported on FRED. The coordinate-indexed interface accepts coordinate--feature pairs from any voxelization, but cross-class and cross-dataset benchmarking is required before generalizing beyond single-class drone detection.

\subsection{Ethical Considerations}

Drone detection has dual-use implications: it can protect critical infrastructure but could also enable surveillance. The analysis threshold we document (score 0.25, class-agnostic non-maximum suppression at IoU 0.5, at most 100 detections per frame, giving precision 0.854 at recall 0.875) is a development-split analysis point, as is the most permissive threshold on the analysis grid, 0.10, which reaches recall 0.915 at precision 0.651; neither is a deployment setting. Because neither development split contains drone-free frames, no threshold in this paper has been validated against empty-sky false alarms, and a deployment selects and validates its own threshold on recordings that contain both drones and drone-free intervals, measured as false positives per unit time. Detections should route to a human operator and remain decoupled from any autonomous engagement mechanism.

\subsection{Reproducibility}
\label{sec:repro}

All architecture code, training configurations, evaluation scripts, the error-forensics pipeline, the annotation-conversion stage that regenerates every reported label set from FRED's original release, and the stage-wise support profiler behind every occupancy and kernel-tap measurement are publicly available at \url{https://github.com/INQUIRELAB/SparseVoxelDet}; the FRED dataset is publicly available. Seed statistics are the two development-split values of Section~\ref{sec:experiments}, $87.01$ and $86.38$ AP50, reported as individual runs. Four SHA-256 digests pin the reported numbers to their inputs: the label manifest \texttt{6a973831}, the evaluator source \texttt{23c86324}, the selected checkpoint \texttt{b25c62a0}, and the development-validation prediction cache \texttt{2b5f4926} behind every forensics figure in Section~\ref{sec:error_forensics_results}. Full digests accompany the release.

%% file: sections/08_conclusion.tex
\section{Conclusion}
\label{sec:conclusion}

We presented SparseVoxelDet, a coordinate-sparse object detector for event cameras in which no stage, from backbone to detection head, constructs a dense spatial feature grid. Measuring the realized support at every stage exposed the central obstacle to that design, \emph{support inflation}: occupancy climbs from a median 0.0652\% at the input until the fused pyramid is locally near-dense, so naive coordinate sparsity densifies exactly where cost concentrates. Two design responses address it. Expansion-free inverse-convolution fusion restores coarse features only onto stored backbone supports, returning head occupancy from a median 78.88\% to 10.53\% by construction and cutting fusion-stage work by 91.3\%. Quality-aligned supervision then trains the same architecture to 87.01 AP50 and 43.14 AP50:95 on the source-complete development protocol, a 1.49 AP50 gain over center-only supervision with architecture, data, labels, evaluator, and optimizer topology held fixed, ahead of the same-label, same-evaluator dense controls at 77.29 AP50 letterboxed and 84.68 resolution-matched. Evaluated once on FRED's canonical test partition after every decision was frozen, the ordering holds: 84.33 and 83.01 AP50 on the two seeds against 79.37 for the dense control.

Preserving that sparsity is worth the effort because it is what the cost saving is made of. Running the selected checkpoint's own weights through numerically matched dense equivalents of its operators, sparse execution is cheaper on all 5{,}000 paired frames profiled, by a median 27.5$\times$ in work and 4.65$\times$ in latency, with the margin decaying from 7.05$\times$ to 1.41$\times$ as occupancy rises and never reaching parity. The saving is a property not of the representation alone but of how much support the network carries when it executes, which is precisely what expansion-free fusion controls.

The lesson generalizes: input occupancy is not a compute guarantee. A coordinate-sparse pipeline that fuses scales top-down either accepts support expansion or prevents it by construction, and preserving sparsity costs a small, measured 0.60 AP50 under center-only supervision that better supervision more than repays. Preserved by construction and supervised well, the sparse path keeps both the efficiency and the accuracy through detection.

%% file: sections/09_appendix_protocol.tex
\begin{appendices}

\section{Support-Efficiency Reporting Protocol}
\label{app:protocol}

A single input-occupancy figure can misrepresent realized cost by orders of magnitude (Section~\ref{sec:efficiency}). We propose that coordinate-sparse detectors report six fields over a stated evaluation split; the released profiler emits all of them in one pass.

\begin{enumerate}[leftmargin=*]
  \item \textbf{Per-stage realized support.} Active-site count and occupancy (median, mean, p95, max) at the input, every backbone and fusion stage, and the head.
  \item \textbf{Support-inflation factor and extraneous-support fraction.} $\iota_k$ and $\varepsilon_k$ (Section~\ref{sec:fpn}) for every fusion or upsampling operator.
  \item \textbf{Realized arithmetic.} Realized taps per active output and total realized multiply--accumulates, never conflated with site counts.
  \item \textbf{Batch-one latency distribution.} p50, p95, and p99 in the streaming regime, with warm-up rule, precision, and preprocessing scope stated.
  \item \textbf{Peak memory.} Peak activation memory at batch one, same conditions.
  \item \textbf{Execution stack.} GPU, driver, framework, and sparse-library versions; sparse wall-clock does not transfer across stacks.
\end{enumerate}

Fields 1--3 are properties of the architecture and data and transfer across hardware; fields 4--6 are stack-specific. Reporting them separately keeps architecture-causal claims distinct from implementation-dependent ones.

\section{Proof of Proposition~\ref{prop:support}}
\label{app:repro}
\label{app:proof}

\begin{proof}
The cached indice data fixes the output coordinate rows: weights, input feature values, and numerical cancellation affect only the stored feature vectors and neither add nor remove coordinate rows (a zero feature vector retains its row). The lateral submanifold projection computes output features on its input's coordinate array unchanged, so it preserves not only the coordinate set but the exact row order, the stronger property the inverse layer's precondition requires. That precondition is established at the backbone boundary rather than inherited from it. The squeeze--excitation and residual addition that follow each stride change can leave a stage's active set a proper subset of its stride-changing convolution's output rows, so each stage output an inverse convolution will consume (the stride-8 and stride-16 levels) is rebuilt onto the row order captured at that stage's stride-changing convolution before it leaves the backbone, with each dropped row restored carrying a zero feature vector. The first inverse therefore receives its input in the required order for the same reason the second does. Row-aligned sparse element-wise addition of tensors with structural supports $A, B \subseteq S_{k-1}$ yields structural support $A \cup B$; with $\mathrm{coords}(P_{k-1}) = S_{k-1}$ this union is $S_{k-1}$. The row-order rebuild restores the alignment precondition for the next inverse, so the recursion over $k{=}4,3$ gives the stride-4 statement.
\end{proof}

Two scope remarks: $D_k$ is the indice data cached by the stage's designated stride-changing convolution, so the coordinate map from level $k{-}1$ to $k$ is well defined; and the proposition formalizes the library's indice-cache contract, with the runtime certificate of Section~\ref{sec:fpn} establishing that the invariant holds in the shipped implementation.

The full executable specification behind Sections~\ref{sec:method} and~\ref{sec:experiments} (the fifteen SHA-256-pinned reimplementation sources covering voxel construction, dataset and augmentation, both fusion variants, both training objectives, trainer, evaluator, and label conversion; the label-provenance and timing conventions; the dense control's complete training recipe; and the topology-matched arm's build digests) is documented in the code release (Section~\ref{sec:repro}).

\end{appendices}